\newcommand{\tensor}[1]{\underline{\mathbf{#1}}}
\newtheorem{Proposition}{Proposition}[section]
\begin{document}
%
\title{Higher-Order Partial Least Squares (HOPLS): \\A Generalized Multi-Linear Regression Method }
%
%
%
%

\author{Qibin Zhao,
         Cesar F. Caiafa,
         Danilo P. Mandic,
         Zenas C. Chao,
         Yasuo Nagasaka,
        Naotaka Fujii,\\
        Liqing Zhang and
        Andrzej Cichocki 
\IEEEcompsocitemizethanks{\IEEEcompsocthanksitem Q. Zhao is with Laboratory for Advanced Brain Signal Processing, Brain Science Institute, RIKEN, Saitama, Japan.
\IEEEcompsocthanksitem C. F. Caiafa is with Instituto Argentino de Radioastronom\'{i}a (IAR), CCT La Plata - CONICET, Buenos Aires, Argentina.
\IEEEcompsocthanksitem Danilo P. Mandic is with Communication and Signal Processing Research Group, Department of Electrical and Electronic Engineering, Imperial College, London, UK
\IEEEcompsocthanksitem Z. C. Chao, Y. Nagasaka and N. Fujii are with Laboratory for Adaptive Intelligence, Brain Science Institute, RIKEN, Saitama, Japan.
\IEEEcompsocthanksitem L. Zhang is with MOE-Microsoft Laboratory for Intelligent Computing and Intelligent Systems and
Department of Computer Science and Engineering, Shanghai Jiao Tong University, Shanghai, China.
\IEEEcompsocthanksitem A. Cichocki is with Laboratory for Advanced Brain Signal Processing, Brain Science Institute, RIKEN, Japan  and Systems Research Institute in Polish Academy of Science.

}
\thanks{} }

\IEEEcompsoctitleabstractindextext{%
\begin{abstract}

A new generalized multilinear regression model, termed the Higher-Order Partial Least Squares (HOPLS), is introduced with the aim to predict a tensor (multiway array) $\tensor{Y}$ from a tensor $\tensor{X}$ through projecting the data onto the latent space and performing regression on the corresponding latent variables. HOPLS differs substantially from other regression models in that it explains the data by a sum of orthogonal Tucker tensors, while the number of orthogonal loadings serves as a parameter to control model complexity and prevent overfitting. The low dimensional latent space is optimized sequentially via a deflation operation, yielding the best joint subspace approximation for both $\tensor{X}$ and $\tensor{Y}$. Instead of decomposing $\tensor{X}$ and $\tensor{Y}$ individually, higher order singular value decomposition on a newly defined generalized cross-covariance tensor is employed to optimize the orthogonal loadings. A systematic comparison on both synthetic data and real-world decoding of 3D movement trajectories from electrocorticogram (ECoG) signals demonstrate the advantages of HOPLS over the existing methods in terms of better predictive ability, suitability to handle small sample sizes, and robustness to noise.

\end{abstract}

\begin{IEEEkeywords}
Multilinear regression, Partial least squares (PLS), Higher-order singular value decomposition (HOSVD), Constrained block Tucker decomposition, Electrocorticogram (ECoG), Fusion of behavioral and neural data.
\end{IEEEkeywords}}

\maketitle

\IEEEdisplaynotcompsoctitleabstractindextext

%
\IEEEpeerreviewmaketitle

\section{Introduction}
\IEEEPARstart{T}{he} Partial Least Squares (PLS) is a well-established framework for estimation, regression and classification, whose objective is to predict a set of dependent variables (responses) from a set of independent variables (predictors) through the extraction of a small number of latent variables. One member of the PLS family is Partial Least Squares Regression (PLSR) - a multivariate method which, in contrast to Multiple Linear Regression (MLR) and Principal Component Regression (PCR), is proven to be particularly suited to highly collinear data \cite{Montgomery2001, dhanjal2009efficient}. In order to predict response variables $\mathbf{Y}$ from independent variables $\mathbf{X}$, PLS finds a set of latent variables (also called latent vectors, score vectors or components) by projecting both $\mathbf{X}$ and $\mathbf{Y}$ onto a new subspace, while at the same time maximizing the pairwise covariance between the latent variables of $\mathbf{X}$ and $\mathbf{Y}$. A standard way to optimize the model parameters is the Non-linear Iterative Partial Least Squares (NIPALS)\cite{wold1975soft}; for an overview of PLS and its applications in neuroimaging see \cite{krishnan2010partial, abdi2010partial, rosipal2006overview}. There are many variations of the PLS model including the orthogonal projection on latent structures (O-PLS) \cite{trygg2002orthogonal}, Biorthogonal PLS (BPLS) \cite{ergon2002pls}, recursive partial least squares (RPLS) \cite{vijayakumar2000locally}, nonlinear PLS \cite{rosipal2002kernel, ham2004kernel}. The PLS regression is known to exhibit high sensitivity to noise, a problem that can be attributed to redundant latent variables\cite{bro2005standard}, whose selection still remains an open problem \cite{li2002model}. Penalized regression methods are also popular for simultaneous variable selection and coefficient estimation, which impose e.g., L2 or L1 constraints on the regression coefficients. Algorithms of this kind are Ridge regression and Lasso \cite{tibshirani1996regression}. The recent progress in sensor technology, biomedicine, and biochemistry has highlighted the necessity to consider multiple data streams as multi-way data structures \cite{bro1998multi}, for which the corresponding analysis methods are very naturally based on tensor decompositions \cite{kolda2009tensor,Cichocki2009,acar2010scalable}. Although matricization of a tensor is an alternative way to express such data, this would result in the \textquotedblleft  Large $p$ Small $n$\textquotedblright  problem and also make it difficult to interpret the results, as the physical meaning and multi-way data structures would be lost due to the unfolding operation.

The $N$-way PLS (N-PLS) decomposes the independent and dependent data into rank-one tensors, subject to maximum pairwise covariance of the latent vectors. This promises enhanced stability, resilience to noise, and intuitive interpretation of the results \cite{bro1996multiway, bro2006review}. Owing to these desirable properties N-PLS has found applications in areas ranging from chemometrics\cite{hasegawa2000rational,  nilsson1997multiway, zissis1999two} to neuroscience \cite{martinez2004concurrent, acar2007seizure}. A modification of the N-PLS and the multi-way covariates regression were studied in \cite{bro2001difference, smilde1999multiway,smilde2004multi}, where the weight vectors yielding the latent variables are optimized by the same strategy as in N-PLS, resulting in better fitting to independent data $\tensor{X}$ while maintaining no difference in predictive performance. The tensor decomposition used within N-PLS is Canonical Decomposition /Parallel Factor Analysis (CANDECOMP/PARAFAC or CP) \cite{Harshman1970}, which makes N-PLS inherit both the advantages and limitations of CP \cite{smilde1997comments}. These limitations are related to poor fitness ability, computational complexity and slow convergence when handling multivariate dependent data and higher order ($N>3$) independent data, causing N-PLS not to be guaranteed to outperform standard PLS \cite{zissis1999two, B903649K}.

In this paper, we propose a new generalized mutilinear regression model, called Higer-Order Partial Least Squares (HOPLS), which makes it possible to predict an $M$th-order tensor $\tensor{Y}$ ($M\geq 3$) (or a particular case of two-way matrix $\mathbf{Y}$) from an $N$th-order tensor $\underline{\mathbf{X}}(N\geq 3)$ by projecting tensor $\underline{\mathbf{X}}$ onto a low-dimensional common latent subspace. The latent subspaces are optimized sequentially through simultaneous rank-$(1, L_2,\ldots, L_N)$ approximation of $\underline{\mathbf{X}}$ and rank-$(1, K_2,\ldots, K_M)$ approximation of $\tensor{Y}$ (or rank-one approximation in particular case of two-way matrix $\mathbf{Y}$). Owing to the better fitness ability of the orthogonal Tucker model as compared to CP \cite{kolda2009tensor} and the flexibility of the block Tucker model \cite{de2008decompositions}, the analysis and simulations show that HOPLS proves to be a promising multilinear subspace regression framework that provides not only an optimal tradeoff between fitness and model complexity but also enhanced predictive ability in general. In addition, we develop a new strategy to find a closed-form solution by employing higher-order singular value decomposition (HOSVD) \cite{de2000multilinear}, which makes the computation more efficient than the currently used iterative way.

The article is structured as follows. In Section \ref{sec:history}, an overview of two-way PLS is presented, and the notation and notions related to multi-way data analysis are introduced. In Section \ref{sec:HOPLS}, the new multilinear regression model is proposed, together with the corresponding solutions and algorithms. Extensive simulations on synthetic data and a real world case study on the fusion of behavioral and neural data are presented in Section \ref{results}, followed by conclusions in Section \ref{conclusions}.

\section{Background and Notation}
\label{sec:history}

\subsection{Notation and definitions} \label{subsec:notation}
$N$th-order tensors (\emph{multi-way arrays}) are denoted by underlined boldface capital letters, matrices (\emph{two-way arrays)} by boldface capital letters, and vectors by boldface lower-case letters. The $i$th entry of a vector $\mathbf{x}$ is denoted by $x_i$, element $(i, j)$ of a matrix $\mathbf{X}$ is denoted by $x_{ij}$, and element $(i_1, i_2, \ldots, i_N)$ of an $N$th-order tensor $\underline{\mathbf{X}}\in\mathds{R}^{I_{1}\times I_{2}\times\cdots\times I_{N} }$ by $x_{i_{1}i_{2}\ldots i_{N}}$ or $(\underline{\mathbf{X}})_{i_{1}i_{2}\ldots i_{N}}$. Indices typically range from $1$ to their capital version, e.g., $i_{N}=1, \ldots, I_{N}$. The mode-$n$ matricization of a tensor is denoted by $\mathbf{X}_{(n)}\in \mathds{R} ^{I_n\times I_1 \cdots I_{n-1} I_{n+1} \cdots I_N}$. The $n$th factor matrix in a sequence is denoted by  $\mathbf{A}^{(n)}$.

The \emph{$n$-mode product} of a tensor $\underline{\mathbf{X}}\in\mathds{R}^{I_{1}\times\cdots \times I_{n}\times \cdots \times I_{N} }$ and matrix $\mathbf{A}\in\mathds{R}^{J_n \times I_n}$is denoted by $\underline{\mathbf{Y}}=\underline{\mathbf{X}}\times_n\mathbf{A} \in \mathds{R}^{I_1\times\cdots\times I_{n-1}\times J_n\times I_{n+1}\times\cdots\times I_N}$ and is defined as:
\begin{equation}
 y_{i_1i_2\ldots i_{n-1}j_{n}i_{n+1}\ldots i_N}=\sum_{i_n}x_{i_1i_2\ldots i_n\ldots i_N}a_{j_ni_n}.
\end{equation}

The \emph{rank-$(R_1,R_2,...,R_N)$ Tucker model} \cite{Tucker1963} is a tensor decomposition defined and denoted as follows:
\begin{multline}\label{Tucker}
   \underline{\mathbf{Y}} \approx \underline{\mathbf{G}}\times_1\mathbf{A}^{(1)} \times_2 \mathbf{A}^{(2)}\times_3 \cdots \times_N \mathbf{A}^{(N)}\\
   = [\![ \tensor{G}; \mathbf{A}^{(1)},\ldots,\mathbf{A}^{(N)}]\!],
\end{multline}
where $\underline{\mathbf{G}}\in{\mathds{R}^{R_1\times R_2\times..\times R_N}}, (R_n\leq I_n)$ is the \emph{core tensor} and $\mathbf{A}^{(n)}\in{\mathds{R}^{I_n\times R_n}}$ are the \emph{factor matrices}. The last term  is the simplified notation, introduced in \cite{kolda2006multilinear}, for the Tucker operator. When the factor matrices are orthonormal and the core tensor is all-orthogonal this model is called HOSVD \cite{de2000multilinear, kolda2006multilinear}.

The \emph{CP model} \cite{Harshman1970,Carroll1970,de2000best,Silva2008,kolda2009tensor} became prominent in Chemistry \cite{smilde2004multi} and is defined as a sum of rank-one tensors:
\begin{equation}\label{PARAFAC}
   \underline{\mathbf{Y}}\approx \sum_{r=1}^{R}\lambda_r \mathbf{a}^{(1)}_r \circ \mathbf{a}^{(2)}_r\circ\cdots\circ \mathbf{a}^{(N)}_r,
\end{equation}
where the symbol `$\circ$'    denotes the outer product of vectors, $\mathbf{a}^{(n)}_r$ is the column-$r$ vector of matrix $\mathbf{A}^{(n)}$, and $\lambda_r$ are scalars. The CP model can also be represented by (\ref{Tucker}), under the condition that the core tensor is super-diagonal, i.e., $R_1=R_2=\cdots=R_N$ and $g_{i_1i_2,...,i_N}=0$ if $ i_n\neq i_m$ for all $n\neq m$.


The $1$-mode product between $\underline{\mathbf{G}}\in\mathds{R}^{1\times I_2\times\cdots\times I_N}$ and $\mathbf{t}\in\mathds{R}^{I_1\times 1}$ is of size $I_1\times I_2\times \cdots\times I_N$, and is defined as
\begin{equation}\label{Eq:mode1product}
(\underline{\mathbf{G}} \times_1 \mathbf{t})_{i_{1} i_{2}\ldots i_{N}} = g_{1 i_{2}\ldots i_{N}} t_{i_{1}}.
\end{equation}

The \emph{inner product of two tensors} $\underline{\mathbf{A}},\underline{\mathbf{B}}\in{\mathds{R}^{I_1\times I_2...\times I_N}}$ is defined by $\langle \underline{\mathbf{A}}, \underline{\mathbf{B}}\rangle = \sum_{i_1i_2...i_N} a_{i_1i_2...i_N}b_{i_1i_2...i_N}$, and the squared Frobenius norm by $\|\underline{\mathbf{A}}\|_F^2 = \langle \underline{\mathbf{A}}, \underline{\mathbf{A}}\rangle$.

The \emph{$n$-mode cross-covariance} between an $N$th-order tensor $\underline{\mathbf{X}}\in\mathds{R}^{I_{1}\times \cdots\times I_{n}\times\cdots\times I_{N} }$ and an $M$th-order tensor $\underline{\mathbf{Y}}\in\mathds{R}^{J_{1}\times \cdots\times I_{n} \times\cdots\times J_{M}}$ with the same size $I_n$ on the $n$th-mode, denoted by
$\mbox{COV}_{\{n;n\}}(\underline{\mathbf{X}},\underline{\mathbf{Y}})\in\mathds{R}^{I_1\times\cdots\times I_{n-1} \times I_{n+1}\times\cdots \times I_N \times J_1 \times \cdots\times J_{n-1} \times J_{n+1}\times \cdots\times J_M }$, is defined as
\begin{equation}\label{Eq:HOPLS-covxy2}
\tensor{C}=\mbox{COV}_{\{n;n\}}(\underline{\mathbf{X}},\underline{\mathbf{Y}}) = <\underline{\mathbf{X}}, \underline{\mathbf{Y}}>_{\{n;n\}},
\end{equation}
where the symbol $<\bullet, \bullet>_{\{n;n\}}$ represents an $n$-mode multiplication between two tensors, and is defined as
\begin{multline}\label{Eq:prodtensor}
c_{i_1,\ldots,i_{n-1},i_{n+1}\ldots i_N, j_1,\ldots,j_{n-1}j_{n+1}\ldots j_M} =\\
\sum_{i_n=1}^{I_n} x_{i_1,\ldots,i_n,\ldots,i_{N}} y_{j_1,\ldots,i_n,\ldots,j_M}.
\end{multline}
As a special case, for a matrix $\mathbf{Y}\in\mathds{R}^{I_{n}\times M}$, the \emph{$n$-mode cross-covariance} between $\underline{\mathbf{X}}$ and $\mathbf{Y}$ simplifies as
\begin{equation}\label{Eq:HOPLS-covxy2}
\mbox{COV}_{\{n;1\}}(\underline{\mathbf{X}},\mathbf{Y}) = \underline{\mathbf{X}}\times_{n}\mathbf{Y}^\mathrm{T},
\end{equation}
under the assumption that $n$-mode column vectors of $\underline{\mathbf{X}}$ and columns of $\mathbf{Y}$ are mean-centered.

\subsection{Standard PLS (two-way PLS)}
\begin{figure}[htbp]
\centering
\includegraphics[width=0.45\textwidth]{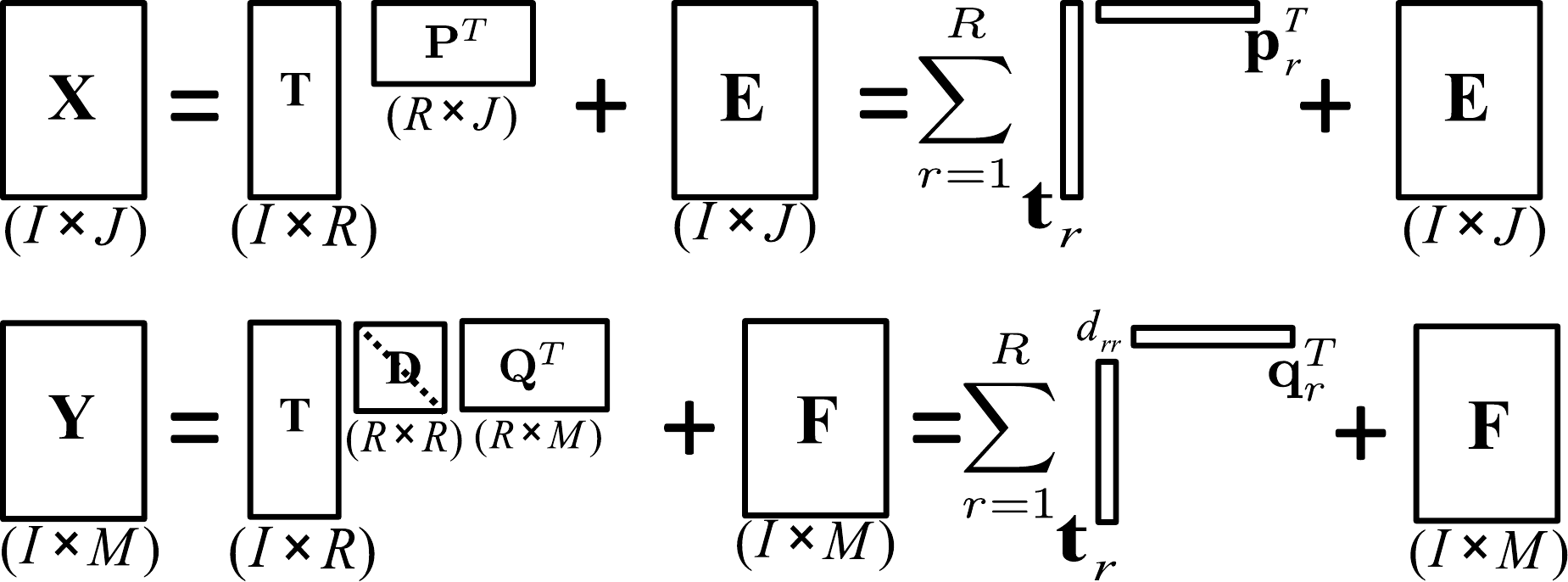}
\caption{\small The PLS model: data decomposition as a sum of rank-one matrices.}
\label{fig:PLS}
\end{figure}

The PLS regression was originally developed for econometrics by H. Wold\cite{wold1975soft, wold1982soft} in order to deal with collinear predictor variables. The usefulness of PLS in chemical applications was illuminated by the group of S. Wold\cite{wold2001pls, wold1984collinearity}, after some initial work by Kowalski \textit{et al}. \cite{kowalski1982systems}. Currently, the PLS regression is being widely applied in chemometrics, sensory evaluation, industrial process control, and more recently, in the analysis of functional brain imaging data\cite{ mcintosh2004partial, mcintosh2004spatiotemporal, kovacevic2007groupwise, chao2010long, trejo2006brain}.

The principle behind PLS is to search for a set of latent vectors by performing a simultaneous decomposition of $\mathbf{X}\in\mathds{R}^{I\times J}$ and $\mathbf{Y}\in\mathds{R}^{I\times M}$ with the constraint that these components explain as much as possible of the covariance between $\mathbf{X}$ and $\mathbf{Y}$. This can be formulated as
\begin{eqnarray}\label{Eq:PLS-X}
 \mathbf{X} &=& \mathbf{T}\mathbf{P}^{T} + \mathbf{E} = \sum_{r=1}^{R}{\mathbf{t}_{r}\mathbf{p}_{r}^{T}} + \mathbf{E},\\
\label{Eq:PLS-Y}
 \mathbf{Y} &=& \mathbf{U}\mathbf{Q}^{T} + \mathbf{F} = \sum_{r=1}^{R}{\mathbf{u}_{r}\mathbf{q}_{r}^{T}} + \mathbf{F},
\end{eqnarray}
where $\mathbf{T}=[\mathbf{t}_1, \mathbf{t}_2, \ldots , \mathbf{t}_R]\in\mathds{R}^{I\times R}$ consists of $R$ extracted orthonormal latent variables from $\mathbf{X}$, i.e. $\mathbf{T}^T\mathbf{T}=\mathbf{I}$, and  $\mathbf{U} =[\mathbf{u}_1, \mathbf{u}_2,\ldots ,\mathbf{u}_R]\in\mathds{R}^{I\times R}$ are latent variables from $\mathbf{Y}$ having maximum covariance with $\mathbf{T}$ column-wise.  The matrices $\mathbf{P}$ and $\mathbf{Q}$ represent loadings and $\mathbf{E}, \mathbf{F}$ are respectively the residuals for $\mathbf{X}$ and $\mathbf{Y}$. In order to find the first set of components, we need to optimize the two sets of weights $\mathbf{w}, \mathbf{q}$ so as to satisfy
\begin{equation}\label{Eq:PLSoptimization}
\max_{\{\mathbf{w}, \mathbf{q}\}} [\mathbf{w}^{T}\mathbf{X}^{T}\mathbf{Y}\mathbf{q}], \quad \mbox{s. t. }\quad \mathbf{w}^{T}\mathbf{w} = 1, \mathbf{q}^{T}\mathbf{q} = 1.
\end{equation}
The latent variable then is estimated as $\mathbf{t}=\mathbf{X}\mathbf{w}$. Based on the assumption of a linear relation $\mathbf{u} \approx d \,\mathbf{t}$, $\mathbf{Y}$ is predicted by
\begin{equation}\label{Eq:PLS-YY}
\mathbf{Y} \approx \mathbf{T}\mathbf{D}\mathbf{Q}^{T},
\end{equation}
where $\mathbf{D}$ is a diagonal matrix with $d_{rr} = \mathbf{u}_{r}^{T}\mathbf{t}_{r}/\mathbf{t}_{r}^{T}\mathbf{t}_{r}$, implying that the problem boils down to finding common latent variables $\mathbf{T}$ that explain the variance of both $\mathbf{X}$ and $\mathbf{Y}$, as illustrated in Fig. {\ref{fig:PLS}}.

\section{Higher-order PLS (HOPLS)}
\label{sec:HOPLS}
\begin{figure}[htbp]
\centering
\includegraphics[width=0.48\textwidth]{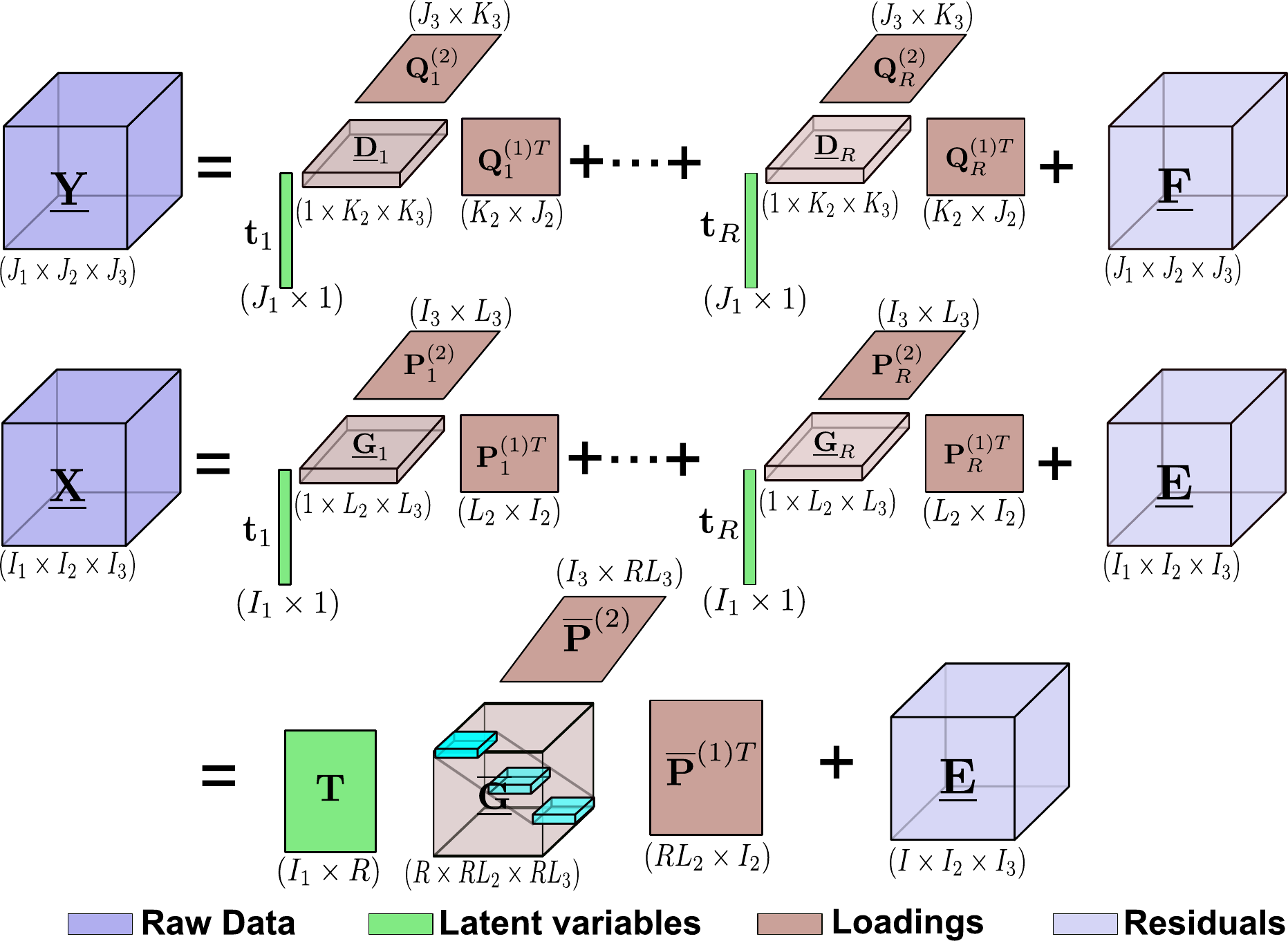}
\caption{\small Schematic diagram of the HOPLS model: approximating $\underline{\mathbf{X}}$ as a sum of rank-$(1, L_2, L_3)$ tensors. Approximation for $\underline{\mathbf{Y}}$ follows a similar principle with shared common latent components $\mathbf{T}$.}
\label{fig:HOPLS}
\end{figure}


For a two-way matrix, the low-rank approximation is equivalent to subspace approximation, however, for a higher-order tensor, these two criteria lead to completely different models (i.e., CP and Tucker model). The $N$-way PLS (N-PLS), developed by Bro \cite{bro1996multiway}, is a straightforward multi-way extension of standard PLS based on the CP model. Although CP model is the best low-rank approximation, Tucker model is the best subspace approximation, retaining the maximum amount of variation \cite{bro2001difference}. It thus provides better fitness than the CP model except in a special case when perfect CP exists, since CP is a restricted version of the Tucker model when the core tensor is super-diagonal.

There are two different approaches for extracting the latent components: sequential and simultaneous methods. A sequential method extracts one latent component at a time, deflates the proper tensors and calculates the next component from the residuals. In a simultaneous method, all components are calculated simultaneously by minimizing a certain criterion. In the following, we employ a sequential method since it provides better performance.

Consider an $N$th-order independent tensor $\underline{\mathbf{X}}\in\mathds{R}^{I_1\times\cdots\times I_N}$ and an $M$th-order dependent tensor $\underline{\mathbf{Y}}\in\mathds{R}^{J_1\times\cdots\times J_M}$, having the same size on the first mode, i.e., $I_1 = J_1$. Our objective is to find the optimal subspace approximation of $\underline{\mathbf{X}}$ and $\underline{\mathbf{Y}}$, in which the latent vectors from $\tensor{X}$ and $\tensor{Y}$ have maximum pairwise covariance. Considering a linear relation between the latent vectors, the problem boils down to finding the common latent subspace which can approximate both $\tensor{X}$ and $\tensor{Y}$ simultaneously. We firstly address the general case of a tensor $\tensor{X} (N\geq3)$ and a tensor $\tensor{Y}(M\geq3)$. A particular case with a tensor $\tensor{X}(N\geq3)$ and a matrix $\mathbf{Y}(M=2)$ is presented separately in Sec. \ref{sec:t2m}, using a slightly different approach.

\subsection{Proposed model}
\label{sec:t2t}
Applying Tucker decomposition within a PLS framework is not straightforward, and to that end we propose a novel block-wise orthogonal Tucker approach to model the data. More specifically, we assume $\underline{\mathbf{X}}$ is decomposed as a sum of rank-($1, L_2, \ldots, L_N$) Tucker blocks, while $\underline{\mathbf{Y}}$ is decomposed as a sum of rank-($1, K_2, \ldots, K_M$) Tucker blocks (see Fig. \ref{fig:HOPLS}), which can be expressed as
\begin{equation}\label{Eq:HOPLS_T2T}
\begin{split}
\underline{\mathbf{X}} &= \sum_{r=1}^{R} \underline{\mathbf{G}}_{r} \! \times_{1} \mathbf{t}_{r}\! \times_{2}\mathbf{P}_{r}^{(1)}\! \times_{3}\cdots\! \times_{N}\mathbf{P}_{r}^{(N-1)} +\!  \underline{\mathbf{E}}_R,\\
\underline{\mathbf{Y}} &= \sum_{r=1}^{R}\underline{\mathbf{D}}_{r}\!  \times_{1} \mathbf{t}_{r}\! \times_{2}\mathbf{Q}_{r}^{(1)}\! \times_{3}\cdots\! \times_M \! \mathbf{Q}_{r}^{(M-1)} + \! \underline{\mathbf{F}}_R,
\end{split}
\end{equation}
where $R$ is the number of latent vectors, $\mathbf{t}_{r}\in\mathds{R}^{I_1}$ is the $r$-th latent vector, $\left\{\mathbf{P}^{(n)}_{r}\right\}_{n=1}^{N-1}\in\mathds{R}^{I_{n+1}\times L_{n+1}}$ and $\left\{\mathbf{Q}^{(m)}_{r}\right\}_{m=1}^{M-1}\in\mathds{R}^{J_{m+1}\times K_{m+1}} $ are loading matrices on mode-$n$ and mode-$m$ respectively, and $\underline{\mathbf{G}}_{r}\in\mathds{R}^{1\times L_2\times\cdots\times L_N}$ and $\underline{\mathbf{D}}_{r}\in\mathds{R}^{1\times K_2\times\cdots\times K_M}$ are core tensors.


However the Tucker decompositions in (\ref{Eq:HOPLS_T2T}) are not unique \cite{kolda2009tensor} due to the permutation, rotation, and scaling issues. To alleviate this problem, additional constraints should be imposed such that the core tensors $\underline{\mathbf{G}}_{r}$ and $\underline{\mathbf{D}}_{r}$ are all-orthogonal, a sequence of loading matrices are column-wise orthonormal, i.e., \mbox{$\mathbf{P}^{(n)T}_{r}\mathbf{P}^{(n)}_{r} =\mathbf{I}$} and \mbox{$\mathbf{Q}^{(m)T}_{r}\mathbf{Q}^{(m)}_{r}=\mathbf{I}$}, the latent vector is of length one, i.e. $\|\mathbf{t}_r\|_F=1$. Thus, each term in (\ref{Eq:HOPLS_T2T}) is represented as an orthogonal Tucker model, implying essentially uniqueness as it is subject only to trivial indeterminacies \cite{de2008decompositions}.

By defining a latent matrix $\mathbf{T} = [\mathbf{t}_{1},\ldots,\mathbf{t}_{R}]$, \mbox{mode-$n$} loading matrix $\overline{\mathbf{P}}^{(n)} =[\mathbf{P}^{(n)}_{1},\ldots,\mathbf{P}^{(n)}_{R}]$, mode-$m$ loading matrix $\overline{\mathbf{Q}}^{(m)} =[\mathbf{Q}^{(m)}_{1},\ldots,\mathbf{Q}^{(m)}_{R}]$ and core tensor $\overline{\underline{\mathbf{G}}}= \mbox{blockdiag}(\underline{\mathbf{G}}_{1},\ldots,\underline{\mathbf{G}}_{R})\in\mathds{R}^{R\times RL_2\times\cdots\times RL_N}$, $\overline{\underline{\mathbf{D}}}= \mbox{blockdiag}(\underline{\mathbf{D}}_{1},\ldots,\underline{\mathbf{D}}_{R})\in\mathds{R}^{R\times RK_2\times\cdots\times RK_M}$, the HOPLS model in (\ref{Eq:HOPLS_T2T}) can be rewritten as
\begin{equation}\label{Eq:HOPLS-model-T}
\begin{split}
   \underline{\mathbf{X}} &= \overline{\underline{\mathbf{G}}}\times_{1}\mathbf{T}\times_{2}\overline{\mathbf{P}}^{(1)}\times_3\cdots\times_{N}\overline{\mathbf{P}}^{(N-1)}+\underline{\mathbf{E}}_R, \\
      \underline{\mathbf{Y}} &= \overline{\underline{\mathbf{D}}}\times_{1}\mathbf{T}\times_{2}\overline{\mathbf{Q}}^{(1)}\times_3\cdots\times_M\overline{\mathbf{Q}}^{(M-1)}+\underline{\mathbf{F}}_R,
\end{split}
\end{equation}
where $\underline{\mathbf{E}}_R$ and $\underline{\mathbf{F}}_R$ are residuals after extracting $R$ components. The core tensors $\overline{\underline{\mathbf{G}}}$ and $\overline{\underline{\mathbf{D}}}$ have a special block-diagonal structure (see Fig. \ref{fig:HOPLS}) and their elements indicate the level of local interactions between the corresponding latent vectors and loading matrices. Note that the tensor decomposition in (\ref{Eq:HOPLS-model-T}) is similar to the block term decomposition discussed in \cite{de2008decompositions}, which aims to the decomposition of only one tensor. However, HOPLS attempts to find the block Tucker decompositions of two tensors with block-wise orthogonal constraints, which at the same time satisfies a certain criteria related to having common latent components on a specific mode.

Benefiting from the advantages of Tucker decomposition over the CP model \cite{kolda2009tensor}, HOPLS promises to approximate data better than N-PLS. Specifically, HOPLS differs substantially from the N-PLS model in the sense that extraction of latent components in HOPLS is based on subspace approximation rather than on low-rank approximation and the size of loading matrices is controlled by a hyperparameter, providing a tradeoff between fitness and model complexity. Note that HOPLS simplifies into \mbox{N-PLS} if we define $\forall n: \{L_n\}=1$ and $\forall m: \{K_m\}=1$.

\subsection{Optimization criteria and algorithm}
\label{optimization_criteria}

The tensor decompositions in (\ref{Eq:HOPLS_T2T}) consists of two simultaneous optimization problems: (i) approximating $\tensor{X}$ and $\tensor{Y}$ by orthogonal Tucker model, (ii) having at the same time a common latent component on a specific mode. If we apply HOSVD individually on $\tensor{X}$ and $\tensor{Y}$, the best rank-($1,L_2,\ldots,L_N$) approximation for $\tensor{X}$ and the best rank-($1,K_2,\ldots,K_M$) approximation for $\tensor{Y}$ can be obtained while the common latent vector $\mathbf{t}_r$ cannot be ensured.
Another way is to find the best approximation of $\tensor{X}$ by HOSVD first, subsequently, $\tensor{Y}$ can be approximated by a fixed $\mathbf{t}_r$. However, this procedure, which resembles multi-way principal component regression \cite{smilde2004multi}, has the drawback that the common latent components are not necessarily predictive for $\tensor{Y}$.

The optimization of subspace transformation according to (\ref{Eq:HOPLS_T2T}) will be formulated as a problem of determining a set of orthogonormal loadings $\mathbf{P}^{(n)}_{r},\mathbf{Q}^{(m)}_{r}, r=1,2,\ldots, R$ and latent vectors $\mathbf{t}_r$ that satisfies a certain criterion. Since each term can be optimized sequentially with the same criteria based on deflation, in the following, we shall simplify the problem to that of finding the first latent vector $\mathbf{t}$ and two sequences of loading matrices $\mathbf{P}^{(n)}$ and $\mathbf{Q}^{(m)}$.

In order to develop a strategy for the simultaneous minimization of the Frobenius norm of residuals $\mathbf{E}$ and $\mathbf{F}$, while keeping a common latent vector $\mathbf{t}$, we first need to introduce the following basic results:
\begin{Proposition}\label{Proposition1}
Given a tensor $\underline{\mathbf{X}}\in \mathds{R}^{I_1 \times \cdots \times I_N }$ and column orthonormal matrices $\mathbf{P}^{(n)}\in\mathds{R}^{I_{n+1}\times L_{n+1}}, n=1,\ldots,N-1$, $\mathbf{t}\in\mathds{R}^{I_1}$ with $\|\mathbf{t}\|_F=1$, the least-squares (LS) solution to $\min_{\underline{\mathbf{G}}} \|\underline{\mathbf{X}} - \underline{\mathbf{G}}\times_1 \mathbf{t} \times_2 \mathbf{P}^{(1)} \times_3 \cdots\times_N \mathbf{P}^{(N-1)} \|^2_F$ is given by $\underline{\mathbf{G}}=\underline{\mathbf{X}}\times_1 \mathbf{t}^T \times_2 \mathbf{P}^{(1)T} \times_3 \cdots \times_N \mathbf{P}^{(N-1)T}$.
\end{Proposition}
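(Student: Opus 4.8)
The plan is to reduce this $N$th-order least-squares problem to a standard matrix least-squares problem by matricizing along the first mode and exploiting the orthonormality of the loadings together with properties of the Kronecker product. First I would unfold the Tucker expression $\underline{\mathbf{G}}\times_1 \mathbf{t} \times_2 \mathbf{P}^{(1)} \times_3 \cdots\times_N \mathbf{P}^{(N-1)}$ along mode-$1$, using the standard identity that the mode-$1$ matricization of $[\![\tensor{G};\mathbf{A}^{(1)},\ldots,\mathbf{A}^{(N)}]\!]$ equals $\mathbf{A}^{(1)}\mathbf{G}_{(1)}(\mathbf{A}^{(N)}\otimes\cdots\otimes\mathbf{A}^{(2)})^T$. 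Here $\mathbf{A}^{(1)}=\mathbf{t}$ (a column vector, so $\mathbf{G}_{(1)}$ is a row vector of length $L_2\cdots L_N$, consistent with $\underline{\mathbf{G}}\in\mathds{R}^{1\times L_2\times\cdots\times L_N}$), and $\mathbf{A}^{(n+1)}=\mathbf{P}^{(n)}$. Writing $\mathbf{K} = \mathbf{P}^{(N-1)}\otimes\cdots\otimes\mathbf{P}^{(1)}$, the objective becomes $\|\mathbf{X}_{(1)} - \mathbf{t}\,\mathbf{G}_{(1)}\mathbf{K}^T\|_F^2$.

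Next I would observe that $[\mathbf{t}\mid \mathbf{K}]$ acts as a (partial) orthonormal system: since $\|\mathbf{t}\|_F=1$ we have $\mathbf{t}^T\mathbf{t}=1$, and since each $\mathbf{P}^{(n)}$ is column-orthonormal, the Kronecker product $\mathbf{K}$ is also column-orthonormal, i.e.\ $\mathbf{K}^T\mathbf{K}=\mathbf{I}$ (using $(\mathbf{A}\otimes\mathbf{B})^T(\mathbf{A}\otimes\mathbf{B}) = (\mathbf{A}^T\mathbf{A})\otimes(\mathbf{B}^T\mathbf{B})$). The standard fact that the LS solution to $\min_{\mathbf{Z}}\|\mathbf{M} - \mathbf{U}\mathbf{Z}\mathbf{V}^T\|_F^2$ with $\mathbf{U},\mathbf{V}$ column-orthonormal is $\mathbf{Z}=\mathbf{U}^T\mathbf{M}\mathbf{V}$ then gives $\mathbf{G}_{(1)} = \mathbf{t}^T\mathbf{X}_{(1)}\mathbf{K}$. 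Finally I would fold this back up: $\mathbf{t}^T\mathbf{X}_{(1)}\mathbf{K} = \mathbf{t}^T\mathbf{X}_{(1)}(\mathbf{P}^{(N-1)}\otimes\cdots\otimes\mathbf{P}^{(1)})$ is precisely the mode-$1$ matricization of $\underline{\mathbf{X}}\times_1 \mathbf{t}^T \times_2 \mathbf{P}^{(1)T} \times_3 \cdots \times_N \mathbf{P}^{(N-1)T}$, which is the claimed formula for $\underline{\mathbf{G}}$.

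Alternatively, and perhaps more cleanly, one can avoid matricization entirely: expand $\|\underline{\mathbf{X}} - \underline{\mathbf{G}}\times_1 \mathbf{t} \times_2 \mathbf{P}^{(1)} \times_3 \cdots\times_N \mathbf{P}^{(N-1)}\|_F^2$ as $\|\underline{\mathbf{X}}\|_F^2 - 2\langle \underline{\mathbf{X}}, \underline{\mathbf{G}}\times_1 \mathbf{t} \times_2 \mathbf{P}^{(1)} \times\cdots\rangle + \|\underline{\mathbf{G}}\times_1 \mathbf{t} \times_2 \mathbf{P}^{(1)} \times\cdots\|_F^2$. The cross term equals $\langle \underline{\mathbf{X}}\times_1\mathbf{t}^T\times_2\mathbf{P}^{(1)T}\times\cdots, \underline{\mathbf{G}}\rangle$ by the adjoint property of $n$-mode products, and the quadratic term equals $\|\underline{\mathbf{G}}\|_F^2$ because the orthonormality of $\mathbf{t}$ and all the $\mathbf{P}^{(n)}$ preserves the Frobenius norm. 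Completing the square in $\underline{\mathbf{G}}$ then yields the minimizer $\underline{\mathbf{G}}=\underline{\mathbf{X}}\times_1\mathbf{t}^T\times_2\mathbf{P}^{(1)T}\times\cdots\times_N\mathbf{P}^{(N-1)T}$.

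I do not expect any serious obstacle here; the result is the tensor analogue of projecting onto an orthonormal basis. The only point requiring a little care is bookkeeping: tracking the order in which the Kronecker factors appear (the reversal convention in the mode-$1$ unfolding identity) and verifying that norm-preservation genuinely uses only $\mathbf{t}^T\mathbf{t}=1$ and $\mathbf{P}^{(n)T}\mathbf{P}^{(n)}=\mathbf{I}$ (not the reverse products, which need not be identities). I would present the norm-expansion argument as the main proof since it is coordinate-free and makes the role of each orthonormality hypothesis transparent.
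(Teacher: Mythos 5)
Your proposal is correct, and your main (matricization) argument is essentially identical to the paper's own proof: unfold along mode-1, note that $\mathbf{t}$ and the Kronecker product $\mathbf{P}^{(N-1)}\otimes\cdots\otimes\mathbf{P}^{(1)}$ are column-orthonormal, and read off $\mathbf{G}_{(1)}=\mathbf{t}^T\mathbf{X}_{(1)}(\mathbf{P}^{(N-1)}\otimes\cdots\otimes\mathbf{P}^{(1)})$. Your alternative norm-expansion argument is a valid coordinate-free rephrasing of the same projection-onto-an-orthonormal-basis idea, not a genuinely different route.
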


\begin{proof}
\hangindent \leftmargini
This result is very well known and is widely used in the literature \cite{kolda2009tensor, de2000multilinear}. A simple proof is based on writing the mode-1 matricization of tensor $\underline{\mathbf{X}}$ as
\begin{equation}
\mathbf{X}_{(1)} = \mathbf{t}\mathbf{G}_{(1)}(\mathbf{P}^{(N-1)}\otimes \cdots\otimes \mathbf{P}^{(1)})^T + \mathbf{E}_{(1)},
\end{equation}
where tensor $\underline{ \mathbf{E}}_{(1)}$ is the residual and the symbol `$\otimes$' denotes the \emph{Kronecker product}. Since $\mathbf{t}^T\mathbf{t}=1$ and $(\mathbf{P}^{(N-1)}\otimes \cdots\otimes \mathbf{P}^{(1)})$ is column orthonormal, the LS solution of $\mathbf{G}_{(1)}$ with fixed matrices $\mathbf{t}$ and $\mathbf{P}^{(n)}$ is given by  $\mathbf{G}_{(1)} = \mathbf{t}^T\mathbf{X}_{(1)}(\mathbf{P}^{(N-1)}\otimes \cdots\otimes \mathbf{P}^{(1)})$; writing it in a tensor form we obtain the desired result. \qedhere
\end{proof}

\begin{Proposition} \label{Proposition2}
Given a fixed tensor $\underline{\mathbf{X}}\in \mathds{R}^{I_1\times\cdots\times I_N}$, the following two constrained optimization problems are equivalent:

1) $\min_{\{\mathbf{P}^{(n)},\mathbf{t}\}} \|\underline{\mathbf{X}} - \underline{\mathbf{G}}\times_1 \mathbf{t}\! \times_2\mathbf{P}^{(1)}\! \times_3\cdots\! \times_{N}\mathbf{P}^{(N-1)} \|^2_F$,
s. t. matrices $\mathbf{P}^{(n)}$ are column orthonormal and $\|\mathbf{t}\|_F =1$.

2) $\max_{\{\mathbf{P^{(n)}},\mathbf{t}\}} \|\underline{\mathbf{X}}\times_1 \mathbf{t}^T \times_2 \mathbf{P}^{(1)T} \times_3 \cdots\! \times_{N}\mathbf{P}^{(N-1)T} \|^2_F$, s. t. matrices $\mathbf{P}^{(n)}$ are column orthonormal and $\|\mathbf{t}\|_F =1$.
\end{Proposition}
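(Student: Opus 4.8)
The plan is to eliminate the core tensor $\underline{\mathbf{G}}$ using Proposition \ref{Proposition1} and then rewrite the constant $\|\underline{\mathbf{X}}\|_F^2$ as a Pythagorean (orthogonal) split of the fitted part and the residual. First I would note that the minimization over the core in problem 1) has, in effect, already been performed: by Proposition \ref{Proposition1}, for any feasible $\mathbf{t}$ and $\mathbf{P}^{(n)}$ the core minimizing the objective is $\underline{\mathbf{G}}=\underline{\mathbf{X}}\times_1\mathbf{t}^T\times_2\mathbf{P}^{(1)T}\times_3\cdots\times_N\mathbf{P}^{(N-1)T}$. Hence problem 1) is equivalent to minimizing, over the same feasible set, $\|\underline{\mathbf{E}}\|_F^2$, where $\underline{\mathbf{E}}:=\underline{\mathbf{X}}-\underline{\mathbf{G}}\times_1\mathbf{t}\times_2\mathbf{P}^{(1)}\times_3\cdots\times_N\mathbf{P}^{(N-1)}$ is the residual built from this optimal core.

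Next I would pass to the mode-$1$ matricization, exactly as in the proof of Proposition \ref{Proposition1}. Writing $\mathbf{W}:=\mathbf{P}^{(N-1)}\otimes\cdots\otimes\mathbf{P}^{(1)}$, which is column orthonormal as a Kronecker product of column-orthonormal matrices, the fitted part matricizes to $\mathbf{t}\mathbf{G}_{(1)}\mathbf{W}^T=\mathbf{t}\mathbf{t}^T\mathbf{X}_{(1)}\mathbf{W}\mathbf{W}^T$. Since $\mathbf{t}\mathbf{t}^T$ and $\mathbf{W}\mathbf{W}^T$ are orthogonal projectors acting on the left and on the right respectively, this is an orthogonal projection of $\mathbf{X}_{(1)}$, so $\mathbf{E}_{(1)}$ is Frobenius-orthogonal to it (a one-line trace computation using $\mathbf{t}^T\mathbf{t}=1$ and $\mathbf{W}^T\mathbf{W}=\mathbf{I}$). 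The Pythagorean identity then gives
\[
\|\underline{\mathbf{X}}\|_F^2 = \|\mathbf{t}\mathbf{t}^T\mathbf{X}_{(1)}\mathbf{W}\mathbf{W}^T\|_F^2 + \|\underline{\mathbf{E}}\|_F^2 = \|\mathbf{t}^T\mathbf{X}_{(1)}\mathbf{W}\|_F^2 + \|\underline{\mathbf{E}}\|_F^2 ,
\]
where the last equality uses that left-multiplication by the unit vector $\mathbf{t}$ and right-multiplication by $\mathbf{W}^T$ both preserve the Frobenius norm. Translating the surviving term back to tensor notation, $\|\mathbf{t}^T\mathbf{X}_{(1)}\mathbf{W}\|_F^2=\|\underline{\mathbf{X}}\times_1\mathbf{t}^T\times_2\mathbf{P}^{(1)T}\times_3\cdots\times_N\mathbf{P}^{(N-1)T}\|_F^2=\|\underline{\mathbf{G}}\|_F^2$, which is precisely the objective of problem 2).

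Finally I would observe that $\|\underline{\mathbf{X}}\|_F^2$ is a fixed constant, independent of the optimization variables, so the displayed identity says exactly that the objective of 2) equals $\|\underline{\mathbf{X}}\|_F^2$ minus the objective of 1). Therefore minimizing $\|\underline{\mathbf{E}}\|_F^2$ over the feasible set is equivalent to maximizing $\|\underline{\mathbf{X}}\times_1\mathbf{t}^T\times_2\mathbf{P}^{(1)T}\times_3\cdots\times_N\mathbf{P}^{(N-1)T}\|_F^2$ over the same set; since the two problems carry identical constraints ($\mathbf{P}^{(n)}$ column orthonormal and $\|\mathbf{t}\|_F=1$), they are equivalent and attain their optimum at the same points. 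I do not anticipate a genuine obstacle here; the only step needing a little care is the orthogonality of the residual to the fitted part, which is why I would route the argument through the matricized picture, where the reconstruction appears transparently as the two-sided projection $\mathbf{t}\mathbf{t}^T(\cdot)\mathbf{W}\mathbf{W}^T$, rather than manipulating $n$-mode products directly.
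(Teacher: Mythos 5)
Your proof is correct, and it is essentially the argument the paper relies on: the paper does not prove Proposition \ref{Proposition2} itself but defers to Kolda and Bader (pp.~477--478), where the same identity $\|\underline{\mathbf{X}}\|_F^2=\|\underline{\mathbf{G}}\|_F^2+\|\underline{\mathbf{E}}\|_F^2$ (with $\underline{\mathbf{G}}$ the least-squares core from Proposition \ref{Proposition1}) is established via the orthogonality of the residual to the projected part. Your matricized two-sided-projector formulation $\mathbf{t}\mathbf{t}^T\mathbf{X}_{(1)}\mathbf{W}\mathbf{W}^T$ is a clean and fully valid way to carry out that step.
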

The proof is available in \cite{kolda2009tensor} (see pp. 477-478).

Assume that the orthonormal matrices $\mathbf{P}^{(n)}, \mathbf{Q}^{(m)}, \mathbf{t}$ are given, then from Proposition \ref{Proposition1}, the core tensors in (\ref{Eq:HOPLS_T2T}) can be computed as
\begin{equation}\label{Eq:coreGD}
\begin{split}
\tensor{G} &= \tensor{X}\times_1 \mathbf{t}^T \times_2 \mathbf{P}^{(1)T} \times_3 \cdots\! \times_{N}\mathbf{P}^{(N-1)T},\\
\tensor{D} &= \tensor{Y}\times_1 \mathbf{t}^T \times_2 \mathbf{Q}^{(1)T} \times_3 \cdots\! \times_{M}\mathbf{Q}^{(M-1)T}.
\end{split}
\end{equation}
According to Proposition \ref{Proposition2}, minimization of $\|\tensor{E}\|_F$ and $\|\tensor{F}\|_F$ under the orthonormality constraint is equivalent to maximization of $\|\tensor{G}\|_F$ and $\|\tensor{D}\|_F$.

However, taking into account the common latent vector $\mathbf{t}$ between $\tensor{X}$ and $\tensor{Y}$, there is no straightforward way to maximize $\|\tensor{G}\|_F$ and $\|\tensor{D}\|_F$ simultaneously. To this end, we propose to maximize a product of norms of two core tensors, i.e., $\max \{\|\tensor{G}\|_F^2 \cdot \|\tensor{D}\|_F^2\}$. Since the latent vector $\mathbf{t}$ is determined by $\mathbf{P}^{(n)},\mathbf{Q}^{(m)}$, the first step is to optimize the orthonormal loadings, then the common latent vectors can be computed by the fixed loadings.

\begin{Proposition}\label{Proposition3}
Let $\tensor{G}\in\mathds{R}^{1\times L_2 \times\cdots\times L_N}$ and $\tensor{D}\in\mathds{R}^{1\times K_2 \times\cdots\times K_M}$, then $\|<\tensor{G},\tensor{D}>_{\{1;1\}}\|^2_F =  \|\tensor{G}\|^2_F \cdot \|\tensor{D}\|^2_F$.
\end{Proposition}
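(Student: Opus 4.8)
The plan is to prove the identity by a direct coordinate computation, the whole point being that the mode along which $\tensor{G}$ and $\tensor{D}$ are contracted has dimension one, so that the $\{1;1\}$-mode product degenerates into an outer product.

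First I would expand the definition~(\ref{Eq:prodtensor}) of the $n$-mode contraction with $n=1$. Writing $\tensor{C} = <\tensor{G},\tensor{D}>_{\{1;1\}}$, its entries are
\begin{equation}
c_{i_2\ldots i_N,j_2\ldots j_M} = \sum_{i_1=1}^{1} g_{i_1 i_2 \ldots i_N}\, d_{i_1 j_2 \ldots j_M} = g_{1 i_2 \ldots i_N}\, d_{1 j_2 \ldots j_M},
\end{equation}
since the summation index $i_1$ takes only the single value $1$. Hence every entry of $\tensor{C}$ factorizes as a product of an entry of $\tensor{G}$ and an entry of $\tensor{D}$; equivalently, $\tensor{C}$ is the outer product of the vectorizations of $\tensor{G}$ and $\tensor{D}$.

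Next I would evaluate the squared Frobenius norm from its definition and separate the two index groups, which is legitimate precisely because the entry factorizes:
\begin{equation}
\begin{split}
\|\tensor{C}\|_F^2 &= \sum_{i_2\ldots i_N}\sum_{j_2\ldots j_M} g_{1 i_2 \ldots i_N}^2\, d_{1 j_2 \ldots j_M}^2 \\
&= \Bigl(\sum_{i_2\ldots i_N} g_{1 i_2 \ldots i_N}^2\Bigr)\Bigl(\sum_{j_2\ldots j_M} d_{1 j_2 \ldots j_M}^2\Bigr) = \|\tensor{G}\|_F^2\,\|\tensor{D}\|_F^2,
\end{split}
\end{equation}
where the last step uses that the first mode of each of $\tensor{G}$ and $\tensor{D}$ contributes only the index value $1$, so the two displayed sums are exactly $\|\tensor{G}\|_F^2$ and $\|\tensor{D}\|_F^2$.

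There is essentially no obstacle here; the only care needed is the degenerate bookkeeping of the size-one first mode, i.e. making explicit that the contraction sum in~(\ref{Eq:prodtensor}) collapses to a single term. An equivalent, perhaps cleaner, way to present the argument is to note that $\operatorname{vec}(\tensor{C}) = \operatorname{vec}(\tensor{D}) \otimes \operatorname{vec}(\tensor{G})$ and invoke the elementary Kronecker-product identity $\|\mathbf{a}\otimes\mathbf{b}\|_2 = \|\mathbf{a}\|_2\,\|\mathbf{b}\|_2$, which is exactly the vector version of the statement being proved.
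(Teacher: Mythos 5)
Your proposal is correct and follows essentially the same route as the paper: both arguments rest on the observation that contraction along the size-one first mode degenerates into the outer product $\mathrm{vec}(\tensor{G})\,\mathrm{vec}^{T}(\tensor{D})$, after which the Frobenius norm factorizes. The paper finishes via the trace identity for a rank-one matrix while you sum over indices directly (and note the Kronecker-norm shortcut), but these are interchangeable phrasings of the same computation.
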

\begin{proof}
\hangindent \leftmargini
\begin{align}
\nonumber
\|<\tensor{G},&\tensor{D}>_{\{1;1\}}\|_F^2 =  \|\mbox{vec}(\tensor{G})\mbox{vec}^{T}(\tensor{D})\|_F^2\\
\nonumber
&= \mbox{trace}\left(\mbox{vec}(\tensor{D})\mbox{vec}^{T}(\tensor{G}) \mbox{vec}(\tensor{G})\mbox{vec}^{T}(\tensor{D})^{T}  \right)\\
& = \|\mbox{vec}(\tensor{G}) \|_F^2 \cdot \|\mbox{vec}(\tensor{D}) \|_F^2.
\end{align}
where $\mbox{vec}(\tensor{G})\in \mathds{R}^{L_2L_3\ldots L_N}$ is the vectorization of the tensor $\tensor{G}$. \qedhere
\end{proof}

From Proposition \ref{Proposition3}, observe that to maximize \mbox{$\|\tensor{G}\|_F^2 \cdot \|\tensor{D}\|_F^2$} is equivalent to maximizing \mbox{$\|<\tensor{G},\tensor{D}>_{\{1;1\}}\|_F^2$}. According to (\ref{Eq:coreGD}) and $\mathbf{t}^{T}\mathbf{t}=1$, \mbox{$\|<\tensor{G},\tensor{D}>_{\{1;1\}} \|_F^{2}$} can be expressed as
\begin{equation}\label{Eq:HOPLS-newobj1}
\small
\left \|[\![ <\tensor{X},\tensor{Y}>_{\{1;1\}}; \mathbf{P}^{(1)T}\!\!\!,\ldots,\!\mathbf{P}^{(N-1)T},\mathbf{Q}^{(1)T}\!\!\!\!,\!\ldots,\!\mathbf{Q}^{(M-1)T}   ]\!]\right\|_F^{2}.
\end{equation}
Note that this form is quite similar to the optimization problem for two-way PLS in (\ref{Eq:PLSoptimization}), where the cross-covariance matrix $\mathbf{X}^T\mathbf{Y}$ is replaced by \mbox{$<\tensor{X},\tensor{Y}>_{\{1;1\}}$}. In addition, the optimization item becomes the norm of a small tensor in contrast to a scalar in (\ref{Eq:PLSoptimization}). Thus, if we define \mbox{$<\tensor{X},\tensor{Y}>_{\{1;1\}}$ as a mode-$1$} cross-covariance tensor $\underline{\mathbf{C}}=\mbox{COV}_{\{1;1\}}(\underline{\mathbf{X}},\underline{\mathbf{Y}})\in\mathds{R}^{I_2\times\cdots\times I_N \times J_2 \times \cdots\times J_M}$, the optimization problem can be finally formulated as
\begin{align}
\label{Eq:HOPLS-finalobj}
\nonumber
   \!\! \!\!\max_{\left\{\mathbf{P^{(n)}},\mathbf{Q^{(m)}}\right\}} &\left \| [\![ \underline{\mathbf{C}}; \mathbf{P}^{(1)T},\!\ldots,\!\mathbf{P}^{(N-1)T},\mathbf{Q}^{(1)T},\!\ldots\!,\mathbf{Q}^{(M-1)T}   ]\!]\right\|_F^{2}  \\
 \mbox{s. t.} \quad & \mathbf{P}^{(n)T}\mathbf{P}^{(n)} =\mathbf{I}_{L_{n+1}}, \mathbf{Q}^{(m)T}\mathbf{Q}^{(m)} = \mathbf{I}_{K_{m+1}},
\end{align}
where $\mathbf{P}^{(n)}, n=1,\ldots,N-1$ and $\mathbf{Q}^{(m)},  m=1,\ldots,M-1$ are the parameters to optimize.

Based on Proposition \ref{Proposition2} and orthogonality of $\mathbf{P}^{(n)},\mathbf{Q}^{(m)}$, the optimization problem in (\ref{Eq:HOPLS-finalobj}) is equivalent to find the best subspace approximation of $\tensor{C}$ as
\begin{equation}
\tensor{C}  \approx [\![ \tensor{G}^{(C)}; \mathbf{P}^{(1)},\ldots,\mathbf{P}^{(N-1)},\mathbf{Q}^{(1)},\ldots,\mathbf{Q}^{(M-1)}]\!],
\end{equation}
which can be obtained by \mbox{rank-($L_2,\ldots,L_N,K_2,\ldots,K_M$)} HOSVD on tensor $\underline{\mathbf{C}}$. Based on Proposition \ref{Proposition1}, the optimization term in (\ref{Eq:HOPLS-finalobj}) is  equivalent to the norm of core tensor $\tensor{G}^{(C)}$. To achieve this goal, the higher-order orthogonal iteration (HOOI) algorithm \cite{de2000best, kolda2009tensor}, which is known to converge fast, is employed to find the parameters $\mathbf{P}^{(n)}$ and $\mathbf{Q}^{(m)}$ by orthogonal Tucker decomposition of $\underline{\mathbf{C}}$.

Subsequently, based on the estimate of the loadings $\mathbf{P}^{(n)}$ and $\mathbf{Q}^{(m)}$, we can now compute the common latent vector $\mathbf{t}$. Note that taking into account the asymmetry property of the HOPLS framework, we need to estimate $\mathbf{t}$ from predictors $\underline{\mathbf{X}}$ and to estimate regression coefficient $\tensor{D}$ for prediction of responses $\tensor{Y}$.
For a given set of loading matrices $\{\mathbf{P}^{(n)}\}$, the latent vector $\mathbf{t}$ should explain variance of $\underline{\mathbf{X}}$ as much as possible, that is
\begin{equation}\label{Eq:HOPLS-argmaxt}
\mathbf{t}=\arg\min_{\mathbf{t}} \left\|\underline{\mathbf{X}} - [\![\underline{\mathbf{G}};\mathbf{t},\mathbf{P}^{(1)},\ldots,\mathbf{P}^{(N-1)}]\!] \right\|_F^{2},
\end{equation}
which can be easily achieved by choosing $\mathbf{t}$ as the first leading left singular vector of the matrix \mbox{$(\tensor{X}\times_2\mathbf{P}^{(1)T}\times_3\cdots\times_N\mathbf{P}^{(N-1)T})_{(1)}$} as used in the HOOI algorithm (see \cite{kolda2009tensor,kolda2006multilinear}).
Thus, the core tensors $\tensor{G}$ and $\tensor{D}$ are computed by (\ref{Eq:coreGD}).

\renewcommand{\algorithmicrequire}{\textbf{Input:}}
\renewcommand{\algorithmicensure}{\textbf{Output:}}
\begin{algorithm}[t]
\caption{\small{The Higher-order Partial Least Squares (HOPLS) Algorithm for a Tensor $\tensor{X}$ and a Tensor $\tensor{Y}$}}
\label{alg1}
\begin{algorithmic}
\REQUIRE $\underline{\mathbf{X}}\in\mathbb{R}^{I_{1}\times\cdots\times I_{N}}, \underline{\mathbf{Y}}\in\mathbb{R}^{J_{1}\times\cdots\times J_{M}}$, $N\geq3, M\geq3$ and $I_1 = J_1$.
\STATE Number of latent vectors is $R$ and number of loading vectors are $\{L_{n}\}_{n=2}^N$ and $\{K_{m}\}_{m=2}^M$.
\ENSURE $\{\mathbf{P}^{(n)}_{r}\}; \{\mathbf{Q}^{(m)}_{r}\}; \{\underline{\mathbf{G}}_{r}\}; \{\underline{\mathbf{D}}_{r}\}; \mathbf{T}$
\STATE $r=1,\ldots, R; \, n=1,\ldots, N-1; \, m=1,\ldots,M-1$.
\STATE \textbf{Initialization:} $\underline{\mathbf{E}}_{1} \leftarrow \underline{\mathbf{X}}, \quad \underline{\mathbf{F}}_{1} \leftarrow \underline{\mathbf{Y}}$.
\FOR{$r=1$ {\bfseries to}  $R$ }
\IF{ $\|\underline{\mathbf{E}}_{r}\|_F >\varepsilon \; \AND\; \| \underline{\mathbf{F}}_{r}\|_F >\varepsilon$}
\STATE $\underline{\mathbf{C}}_{r} \leftarrow <\underline{\mathbf{E}}_{r}, \underline{\mathbf{F}}_{r}>_{\{1,1\}}$;
\STATE Rank-$(L_{2},\ldots, L_{N},K_{2},\ldots,K_{M})$ orthogonal Tucker decomposition of $\underline{\mathbf{C}}_{r}$ by HOOI \cite{kolda2009tensor} as
\STATE $\underline{\mathbf{C}}_{r} \approx  [\![\underline{\mathbf{G}}_r^{(C_r)}; \mathbf{P}_r^{(1)},\ldots,\mathbf{P}_r^{(N-1)},\mathbf{Q}_r^{(1)},\ldots,\mathbf{Q}_r^{(M-1)}  ]\!]$;
\STATE $\mathbf{t}_r \leftarrow \mbox{the first leading left singular vector by}$  \\
 $\qquad \mbox{SVD}\left[\left(\underline{\mathbf{E}}_r \times_{2}\mathbf{P}_r^{(1)T}\times_{3}\cdots\times_N \mathbf{P}_r^{(N-1)T}\right)_{(1)}\right]$;
\STATE $\underline{\mathbf{G}}_r \leftarrow [\![\underline{\mathbf{E}}_r; \mathbf{t}_r^{T},\mathbf{P}_r^{(1)T},\ldots,\mathbf{P}_r^{(N-1)T} ]\!]$;
\STATE $\underline{\mathbf{D}}_r \leftarrow [\![\underline{\mathbf{F}}_r; \mathbf{t}_r^{T},\mathbf{Q}_r^{(1)T},\ldots,\mathbf{Q}_r^{(M-1)T} ]\!]$;
\STATE \textbf{Deflation}:
\STATE  $ \underline{\mathbf{E}}_{r+1} \leftarrow \underline{\mathbf{E}}_{r} - [\![\underline{\mathbf{G}}_r; \mathbf{t}_r,\mathbf{P}_r^{(1)},\ldots,\mathbf{P}_r^{(N-1)} ]\!]$;
\STATE $ \underline{\mathbf{F}}_{r+1} \leftarrow \underline{\mathbf{F}}_{r} - [\![\underline{\mathbf{D}}_r; \mathbf{t}_r,\mathbf{Q}_r^{(1)},\ldots,\mathbf{Q}_r^{(M-1)} ]\!]$;
\ELSE
\STATE Break;
\ENDIF
\ENDFOR
\end{algorithmic}
\end{algorithm}

The above procedure should be carried out repeatedly using the deflation operation, until an appropriate number of components (i.e., $R$) are obtained, or the norms of residuals are smaller than a certain threshold. The deflation\footnote{Note that the latent vectors are not orthogonal in HOPLS algorithm, which is related to deflation. The theoretical explanation and proof are given in the supplement material.} is performed by subtracting from $\underline{\mathbf{X}}$ and $\tensor{Y}$ the information explained by a rank-($1,L_2,\ldots,L_N$) tensor $\widehat{\tensor{X}}$ and a rank-($1,K_2,\ldots,K_M$) tensor $\widehat{\tensor{Y}}$, respectively. The HOPLS algorithm is outlined in Algorithm \ref{alg1}.

\subsection{The case of the tensor $\tensor{X}$ and matrix $\mathbf{Y}$ }
\label{sec:t2m}


Suppose that we have an $N$th-order independent tensor $\underline{\mathbf{X}}\in\mathds{R}^{I_1\times\cdots\times I_N}$ ($N\geq 3$) and a two-way dependent data $\mathbf{Y}\in\mathds{R}^{I_1\times M}$, with the same sample size $I_1$. Since for two-way matrix, subspace approximation is equivalent to low-rank approximation. HOPLS operates by modeling independent data $\underline{\mathbf{X}}$  as a sum of rank-($1, L_2, \ldots, L_N$) tensors while dependent data $\mathbf{Y}$ is modeled with a sum of rank-one matrices as
\begin{align}
\label{Eq:HOPLS_T2M}
\mathbf{Y} &= \sum_{r=1}^{R}d_{r}\mathbf{t}_{r}\mathbf{q}_{r}^{T} + \mathbf{F}_R,
\end{align}
where $\|\mathbf{q}_r\|=1$ and $d_r$ is a scalar.



\begin{Proposition}\label{Proposition4}
Let $\tensor{Y}\in\mathds{R}^{I\times M}$ and $\mathbf{q}\in\mathds{R}^{M}$ is of length one, then $\mathbf{t} = \mathbf{Yq}$ solves the problem $\min_{t} \|\mathbf{Y} - \mathbf{t}\mathbf{q}^T \|_F^2$. In other words, a linear combination of the columns of $\mathbf{Y}$ by using a weighting vector $\mathbf{q}$ of length one has least squares properties in terms of approximating $\mathbf{Y}$.
\end{Proposition}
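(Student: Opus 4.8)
The plan is to regard this as an unconstrained quadratic minimization in $\mathbf{t}$ with the unit vector $\mathbf{q}$ held fixed, and to show that the objective is, up to a constant, a perfect square centred at $\mathbf{t}=\mathbf{Y}\mathbf{q}$. So there is nothing subtle here beyond bookkeeping with the Frobenius norm and the trace.

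First I would expand, using $\|\mathbf{A}\|_F^2=\langle\mathbf{A},\mathbf{A}\rangle=\mbox{trace}(\mathbf{A}^T\mathbf{A})$,
\[
\|\mathbf{Y}-\mathbf{t}\mathbf{q}^T\|_F^2=\|\mathbf{Y}\|_F^2-2\,\mbox{trace}(\mathbf{Y}^T\mathbf{t}\mathbf{q}^T)+\|\mathbf{t}\mathbf{q}^T\|_F^2 .
\]
By cyclicity of the trace the cross term equals $\mbox{trace}(\mathbf{q}^T\mathbf{Y}^T\mathbf{t})=\mathbf{t}^T\mathbf{Y}\mathbf{q}$, and the last term factors as $\|\mathbf{t}\mathbf{q}^T\|_F^2=\mbox{trace}(\mathbf{q}\mathbf{t}^T\mathbf{t}\mathbf{q}^T)=(\mathbf{t}^T\mathbf{t})(\mathbf{q}^T\mathbf{q})=\|\mathbf{t}\|_F^2$, where the hypothesis $\|\mathbf{q}\|=1$ is used precisely once. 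Completing the square then gives
\[
\|\mathbf{Y}-\mathbf{t}\mathbf{q}^T\|_F^2=\|\mathbf{t}-\mathbf{Y}\mathbf{q}\|_F^2+\|\mathbf{Y}\|_F^2-\|\mathbf{Y}\mathbf{q}\|_F^2 ,
\]
and since the last two terms do not involve $\mathbf{t}$, the minimum over $\mathbf{t}$ is attained exactly at $\mathbf{t}=\mathbf{Y}\mathbf{q}$, with residual norm $\|\mathbf{Y}\|_F^2-\|\mathbf{Y}\mathbf{q}\|_F^2$. Equivalently, one can differentiate the quadratic, set $-2\mathbf{Y}\mathbf{q}+2\mathbf{t}=\mathbf{0}$, and invoke convexity to conclude that this stationary point is the global minimizer.

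I do not expect any real obstacle, as the statement is elementary; the only point worth emphasising is that it is exactly the normalization $\|\mathbf{q}\|=1$ that makes $\mathbf{t}=\mathbf{Y}\mathbf{q}$ — rather than $\mathbf{t}=\mathbf{Y}\mathbf{q}/\|\mathbf{q}\|^2$ — the solution, which is why the hypothesis appears in the proposition. Finally, observing that $\mathbf{Y}\mathbf{q}$ is literally $\sum_m q_m\,\mathbf{y}_m$, a linear combination of the columns $\mathbf{y}_m$ of $\mathbf{Y}$ weighted by the entries of $\mathbf{q}$, delivers the interpretive second sentence of the proposition at no extra cost.
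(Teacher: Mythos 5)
Your proof is correct and follows essentially the same route as the paper: both treat the problem as an ordinary least-squares fit of $\mathbf{t}$ with $\mathbf{q}$ fixed, arriving at $\mathbf{t}=\mathbf{Y}\mathbf{q}(\mathbf{q}^T\mathbf{q})^{-1}=\mathbf{Y}\mathbf{q}$. The only difference is that the paper simply invokes the OLS solution as known, whereas you derive it explicitly by completing the square, which makes the role of the normalization $\|\mathbf{q}\|=1$ more transparent.
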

\begin{proof}
\hangindent \leftmargini
Since $\mathbf{q}$ is given and $\|\mathbf{q}\|=1$, it is obvious that the ordinary least squares solution to solve the problem is $\mathbf{t} = \mathbf{Y}\mathbf{q}(\mathbf{q}^{T}\mathbf{q})^{-1}$, hence, $\mathbf{t}=\mathbf{Yq}$. If a $\mathbf{q}$ with length one is found according to some criterion, then automatically $\mathbf{tq}^T$ with $\mathbf{t}=\mathbf{Yq}$ gives the best fit of $\mathbf{Y}$ for that $\mathbf{q}$.
 \qedhere
\end{proof}

As discussed in the previous section, the problem of minimizing $\|\underline{\mathbf{E}}\|^2_F$ with respect to matrices $\mathbf{P}^{(n)}$ and vector $\mathbf{t}\in \mathds{R}^{I}$ is equivalent to maximizing the norm of core tensor $\tensor{G}$ with an orthonormality constraint. Meanwhile, we attempt to find an optimal $\mathbf{q}$ with unity length which ensures that $\mathbf{Yq}$ is linearly correlated with the latent vector $\mathbf{t}$, i.e., $d\mathbf{t} = \mathbf{Yq} $, then according to Proposition \ref{Proposition4}, $d\mathbf{t}\mathbf{q}^T$ gives the best fit of $\mathbf{Y}$. Therefore, replacing $\mathbf{t}$ by $d^{-1}\mathbf{Y}\mathbf{q}$ in the expression for the core tensor $\tensor{G}$ in (\ref{Eq:coreGD}), we can optimize the parameters of X-loading matrices $\mathbf{P}^{(n)}$ and Y-loading vector $\mathbf{q}$  by maximizing the norm of $\tensor{G}$, which gives the best approximation of both tensor $\tensor{X}$ and matrix $\mathbf{Y}$. Finally, the optimization problem of our interest can be formulated as:
\begin{align}\label{Eq:HOPLS-T2M-Obj}
\nonumber
\max_{\{\mathbf{P}^{(n)},\mathbf{q}\}} & \|\underline{\mathbf{X}}\times_1 \mathbf{Y}^T \times_1 \mathbf{q}^T \times_2 \mathbf{P}^{(1)T}\! \times_{3}\cdots\! \times_{N}\mathbf{P}^{(N-1)T} \|_F^2,\\
\mbox{s. t.} \quad & \mathbf{P}^{(n)T}\mathbf{P}^{(n)}=\mathbf{I}, \|\mathbf{q}\|_F =1.
\end{align}
where the loadings $\mathbf{P}^{(n)}$ and $\mathbf{q}$ are parameters to optimize. This form is similar to (\ref{Eq:HOPLS-finalobj}), but has a different cross-covariance tensor $\underline{\mathbf{C}} = \underline{\mathbf{X}}\times_{1}\mathbf{Y}^\mathrm{T}$ defined between a tensor and a matrix, implying that the problem can be solved by performing a \mbox{rank-($1, L_2, \ldots, L_N$)} HOSVD on $\underline{\mathbf{C}}$. Subsequently, the core tensor $\tensor{G}^{(C)}$ corresponding to $\tensor{C}$ can  also be computed.

Next, the latent vector $\mathbf{t}$ should be estimated so as to best approximate $\tensor{X}$ with given loading matrices $\mathbf{P}^{(n)}$. According to the model for $\tensor{X}$, if we take its mode-1 matricizacion, we can write
\begin{equation}
\mathbf{X}_{(1)} = \mathbf{t}\mathbf{G}_{(1)}(\mathbf{P}^{(N-1)T}\otimes \cdots \otimes \mathbf{P}^{(1)})^T + \mathbf{E}_{(1)},
\end{equation}
where \mbox{$\mathbf{G}_{(1)}\in\mathds{R}^{1\times L_2L_3\ldots L_N}$} is still unknown. However, the core tensor $\tensor{G}$ (i.e., $[\![\underline{\mathbf{X}}; \mathbf{t}^T, \mathbf{P}^{(1)T},\ldots ,\mathbf{P}^{(N-1)T}]\!]$)
and the core tensor $\tensor{G}^{(C)}$ (i.e., $[\![\underline{\mathbf{C}}; \mathbf{q}^T, \mathbf{P}^{(1)T},\ldots ,\mathbf{P}^{(N-1)T}]\!]$) has a linear connection that $\tensor{G}^{(C)} = d \tensor{G}$. Therefore, the latent vector $\mathbf{t}$ can be estimated in another way that is different with the previous approach in Section \ref{optimization_criteria}. For fixed matrices $\mathbf{G}_{(1)}=d^{-1}(\tensor{G}^{(C)})_{(1)}$, $\mathbf{X}_{(1)}$, $\mathbf{P}^{(n)}$ the least square solution for the normalized $\mathbf{t}$, which minimizes the squared norm of the residual $\|\mathbf{E}_{(1)}\|^2_F$, can be obtained from
\begin{equation}\label{Eq:HOPLS-computet}
\mathbf{t}\leftarrow (\underline{\mathbf{X}} \times_2 \mathbf{P}^{(1)T}\! \times_{3}\cdots\! \times_{N}\mathbf{P}^{(N-1)T})_{(1)}\mathbf{G}^{(C)+}_{(1)},\; \mathbf{t}\leftarrow\mathbf{t}/\|\mathbf{t}\|_F,
\end{equation}
where we used the fact that $\mathbf{P}^{(n)}$ are columnwise orthonormal and the symbol $\rq +\rq$  denotes \emph{Moore-Penrose pseudoinverse}. With the estimated latent vector $\mathbf{t}$, and loadings $\mathbf{q}$, the regression coefficient used to predict $\mathbf{Y}$ is computed as
 \begin{equation}
d= \mathbf{t}^{T}\mathbf{Yq}.
\end{equation}

The procedure for a two-way response matrix is summarized in Algorithm \ref{alg2}. In this case, HOPLS model is also shown to unify both standard PLS and N-PLS within the same framework, when the appropriate parameters $L_n$ are selected\footnote{Explanation and proof are given in the supplement material.}.

\renewcommand{\algorithmicrequire}{\textbf{Input:}}
\renewcommand{\algorithmicensure}{\textbf{Output:}}
\begin{algorithm}[t]
\caption{\small{Higher-order Partial Least Squares (HOPLS2) for a Tensor $\tensor{X}$ and a Matrix $\mathbf{Y}$}}
\label{alg2}
\begin{algorithmic}
\REQUIRE $\underline{\mathbf{X}}\in\mathbb{R}^{I_{1}\times I_{2}\times\cdots\times I_{N}}, N\geq 3$ and $\mathbf{Y}\in\mathbb{R}^{I_{1}\times M}$
\STATE The Number of latent vectors is $R$ and the number of loadings are $\{L_{n}\}_{n=2}^N$.
\ENSURE $\{\mathbf{P}^{(n)}_{r}\}; \mathbf{Q}; \{\underline{\mathbf{G}}_{r}\}; \mathbf{D}; \mathbf{T}; \mbox{$r=1,\ldots, R$}, \mbox{$n=2,\ldots, N$}$.
\STATE \textbf{Initialization:} $\underline{\mathbf{E}}_{1} \leftarrow \underline{\mathbf{X}}, \mathbf{F}_{1} \leftarrow \mathbf{Y}$.
\FOR{$r=1$ \TO $R$ }
\IF{ $\|\underline{\mathbf{E}}_{r}\|_F >\varepsilon  \; \AND\; \|\mathbf{F}_{r}\|_F  >\varepsilon $}
\STATE $\underline{\mathbf{C}}_{r} \leftarrow \underline{\mathbf{E}}_{r}\times_{1}\mathbf{F}^{T}_{r}$ ;
\STATE Perform rank-$(1, L_{2},\cdots, L_{N})$ HOOI on $\underline{\mathbf{C}}_{r}$ as
\STATE $\underline{\mathbf{C}}_{r} \approx  \underline{\mathbf{G}}^{(C)}_{r} \times_{1}\mathbf{q}_{r}\times_{2}\mathbf{P}^{(1)}_{r}\times_3\cdots\times_{N}\mathbf{P}^{(N-1)}_{r}$;
\STATE {\small $\mathbf{t}_r \leftarrow \!\! \left(\underline{\mathbf{E}}_{r}\times_{2}\mathbf{P}_{r}^{(1)}\times_3 \!\!\cdots\!\!\times_{N}\mathbf{P}_{r}^{(N-1)}\right)_{(1)}\!\!\left(\mbox{vec}^{T}(\underline{\mathbf{G}}^{(C)}_{r} )\right)^{+}$;}
\STATE $\mathbf{t}_r \leftarrow \mathbf{t}_r/\| \mathbf{t}_r\|_F$;
\STATE $\underline{\mathbf{G}}_r \leftarrow [\![\underline{\mathbf{E}}_r; \mathbf{t}_r^{T},\mathbf{P}_r^{(1)T},\ldots,\mathbf{P}_r^{(N-1)T} ]\!]$;
\STATE $\mathbf{u}_{r} \leftarrow \mathbf{F}_{r}\mathbf{q}_{r}$;
\STATE $d_{r} \leftarrow \mathbf{u}_{r}^{T}\mathbf{t}_{r}$;
\STATE \textbf{Deflation:}
\STATE  $ \underline{\mathbf{E}}_{r+1} \leftarrow \underline{\mathbf{E}}_{r} - [\![\underline{\mathbf{G}}_r; \mathbf{t}_r,\mathbf{P}_r^{(1)},\ldots,\mathbf{P}_r^{(N-1)} ]\!]$;
\STATE $\mathbf{F}_{r+1}\leftarrow \mathbf{F}_{r}-d_{r} \mathbf{t}_{r}\mathbf{q}_{r}^{T}$;
\ENDIF
\ENDFOR
\end{algorithmic}
\end{algorithm}

\subsection{Prediction of the Response Variables}

Predictions from the new observations $\underline{\mathbf{X}}^{new}$ are performed in two steps: projecting the data to the low-dimensional latent space based on model parameters $\tensor{G}_r$, $\mathbf{P}_r^{(n)}$, and predicting the response data based on latent vectors $\mathbf{T}^{new}$ and model parameters $\mathbf{Q}_r^{(m)}$, $\tensor{D}_r$. For simplicity, we use a matricized form to express the prediction procedure as
\begin{align}
\hat{\tensor{Y}}_{(1)}^{new}   &\approx \mathbf{T}^{new} \mathbf{Q}^{*T} = \mathbf{X}^{new}_{(1)} \mathbf{W} \mathbf{Q}^{*T},
\end{align}
where $\mathbf{W}$ and $\mathbf{Q}^{*}$ have $R$ columns, represented by
\begin{equation}
\begin{split}
\mathbf{w}_r &= \left( \mathbf{P}_r^{(N-1)}\otimes\cdots\otimes \mathbf{P}_r^{(1)}   \right) \underline{\mathbf{G}}_{r(1)}^{+},\\
\mathbf{q}_r^{*} & = \underline{\mathbf{D}}_{r(1)} \left( \mathbf{Q}_r^{(M-1)}\otimes\cdots\otimes \mathbf{Q}_r^{(1)}\right)^{T}.
\end{split}
\end{equation}
In the particular case of a two-way matrix $\mathbf{Y}$, the prediction is performed by
\begin{align}
\hat{\mathbf{Y}}^{new}  &\approx  \mathbf{X}^{new}_{(1)} \mathbf{W} \mathbf{D} \mathbf{Q}^{T},
\end{align}
where $\mathbf{D}$ is a diagonal matrix whose entries are $\mathbf{d}_r$ and $r$th column of $\mathbf{Q}$ is $\mathbf{q}_r$, $r=1,\ldots,R$.

\subsection{Properties of HOPLS}

\emph{Robustness to noise.} An additional constraint of keeping the largest $\{L_n\}_{n=2}^{N}$ loading vectors on each mode is imposed in HOPLS, resulting in a flexible model that balances the two objectives of fitness and the significance of associated latent variables. For instance, a larger $L_n$ may fit $\underline{\mathbf{X}}$ better but introduces more noise to each latent vector. In contrast, N-PLS is more robust due to the strong constraint of rank-one tensor structure, while lacking good fit to the data. The flexibility of HOPLS allows us to adapt the model complexity based on the dataset in hands, providing considerable prediction ability (see Fig. \ref{fig:simulation1}, \ref{fig:simulation2}).

\emph{``Large $p$, Small $n$'' problem.} This is particularly important when the dimension of independent variables is high. In contrast to PLS, the relative low dimension of model parameters that need to be optimized in HOPLS. For instance, assume that a 3th-order tensor $\underline{\mathbf{X}}$ has the dimension of $5\times 10 \times 100$, i.e., there are 5 samples and 1000 features. If we apply PLS on $\mathbf{X}_{(1)}$ with size of $5\times 1000$, there are only five samples available to optimize a 1000-dimensional loading vector $\mathbf{p}$, resulting in an unreliable estimate of model parameters. In contrast, HOPLS allows us to optimize loading vectors, having relatively low-dimension, on each mode alternately; thus the number of samples is significantly elevated. For instance, to optimize 10-dimensional loading vectors on the second mode, 500 samples are available, and to optimize the 100-dimensional loading vectors on the third mode there are 50 samples. Thus, a more robust estimate of low-dimensional loading vectors can be obtained, which is also less prone to overfitting and more suitable for ``Large $p$, Small $n$'' problem (see \mbox{Fig. \ref{fig:simulation1}}).

\emph{Ease of interpretation.} The loading vectors in $\mathbf{P}^{(n)}$ reveal new subspace patterns corresponding to the $n$-mode features. However, the loadings from Unfold-PLS are difficult to interpret since the data structure is destroyed by the unfolding operation and the dimension of loadings is relatively high.

\emph{Computation.} N-PLS is implemented by combining a NIPALS-like algorithm with the CP decomposition.
Instead of using an iterative algorithm, HOPLS can find the model parameters using a closed-form solution, i.e., applying HOSVD on the cross-covariance tensor, resulting in enhanced computational efficiency.

Due to the flexibility of HOPLS, the tuning parameters of $L_{n}$ and $K_{m}$, controlling the model complexity, need to be selected based on calibration data. Similarly to the parameter $R$, the tuning parameters can be chosen by cross-validation. For simplicity, two alternative assumptions will been utilized: a) $\forall n, \forall m, L_{n}=K_{m} = \lambda$; b) $L_{n}= \eta R_{n}, K_{m} = \eta R_{m}, 0< \eta \leqslant 1$, i.e., explaining the same percentage of the $n$-mode variance.

\section{Experimental Results}
\label{results}
In the simulations, HOPLS and N-PLS were used to model the data in a tensor form whereas PLS was performed on a mode-1 matricization of the same tensors. To quantify the predictability, the index $Q^{2}$ was defined as $Q^{2} = 1- \| \tensor{Y}-\hat{\tensor{Y}}\|_F^{2}/\|\tensor{Y}\|_F^{2}$, where $\hat{\tensor{Y}}$ denotes the prediction of $\tensor{Y}$ using a model created from a calibration dataset. Root mean square errors of prediction (RMSEP) were also used for evaluation\cite{kim2005three}.

\subsection{Synthetic data}
\label{simulations}
In order to quantitatively benchmark our algorithm against the state of the art, an extensive comparative exploration has been performed on synthetic datasets to evaluate the prediction performance under varying conditions with respect to data structure, noise levels and ratio of variable dimension to sample size. For parameter selection, the number of latent vectors ($R$) and number of loadings ($L_n=K_m=\lambda$) were chosen based on five-fold cross-validation on the calibration dataset. To reduce random fluctuations, evaluations were performed over 50 validation datasets generated repeatedly according to the same criteria.

\subsubsection{Datasets with matrix structure}
\label{simulations_PLS}
The independent data $\mathbf{X}$ and dependent data $\mathbf{Y}$ were generated as:
\begin{equation}\label{Eq:gendata-pls}
\mathbf{X} = \mathbf{T}\mathbf{P}^{T} + \xi{\mathbf{E}}, \qquad \mathbf{Y} = \mathbf{T}\mathbf{Q}^{T} + \xi\mathbf{F},
\end{equation}
where latent variables $\{\mathbf{t},\mathbf{p},\mathbf{q}\}\sim\mathcal{N}(0,1)$, $\mathbf{E}$, $\mathbf{F}$ are Gaussian noises whose level is controlled by the parameter $\xi$. Both the calibration and the validation datasets were generated according to (\ref{Eq:gendata-pls}), with the same loadings $\mathbf{P},\mathbf{Q}$, but a different latent $\mathbf{T}$ which follows the same distribution $\mathcal{N}(0,1)$. Subsequently, the datasets were reorganized as $N$th-order tensors.

To investigate how the prediction performance is affected by noise levels and small sample size, $\{\tensor{X},\tensor{Y}\}\in\mathds{R}^{20\times 10\times 10}$ (Case 1) and $\{\tensor{X},\tensor{Y}\}\in\mathds{R}^{10\times 10\times 10}$ (Case 2) were generated under varying noise levels of 10dB, 5dB, 0dB and -5dB. In the case 3, $\{\tensor{X},\tensor{Y}\}\in\mathds{R}^{10\times 10\times 10}$ were generated with the loadings $\mathbf{P},\mathbf{Q}$ drawn from a uniform distribution $U(0,1)$. The datasets were generated from five latent variables (i.e., $\mathbf{T}$ has five columns) for all the three cases.

\begin{figure}[htbp]
\centering
\includegraphics[width=0.5\textwidth]{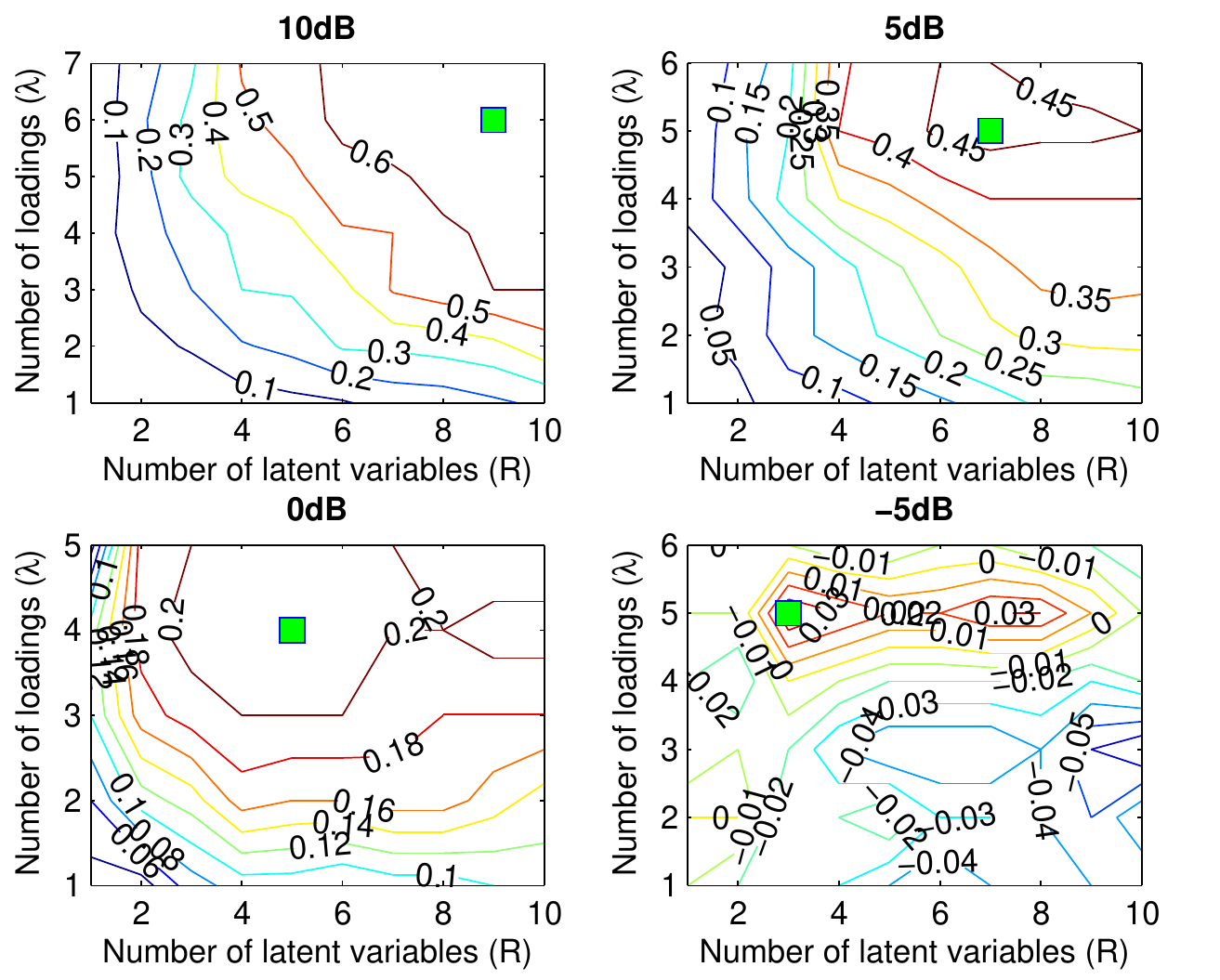}
\caption{\small Five-fold cross-validation performance of HOPLS at different noise levels versus the number of latent variables ($R$) and loadings ($\lambda$). The optimal values for these two parameters are marked by green squares. }
\label{fig:parameter1}
\end{figure}

\begin{table}[htbp]
\caption{\small The selection of parameters $R$ and $\lambda$ in Case 2.}
\renewcommand{\arraystretch}{1.3}
\label{table:parameters1}
\centering
\begin{tabular}{c c c c c c c c c c }
  \hline
  \multirow{2}{*}{SNR} & \multirow{2}{*}{PLS} & \multirow{2}{*}{N-PLS} & \multicolumn{2}{c}{HOPLS} & \multirow{2}{*}{SNR} & \multirow{2}{*}{PLS} & \multirow{2}{*}{N-PLS} & \multicolumn{2}{c}{HOPLS}\\
  \cline{4-5} \cline{9-10}
  & & & $R$ &$\lambda$   & & & & $R$ &$\lambda$\\
    \hline
  10dB & 5 & 7 & 9 &6 &0dB & 3 & 5 & 5 &4\\
  5dB & 5 & 6 & 7 &5&-5dB & 3 & 1 & 3 &5\\
  \hline
\end{tabular}
\end{table}

There are two tuning parameters, i.e., number of latent variables $R$ and number of loadings $\lambda$ for HOPLS and only one parameter $R$ for PLS and N-PLS, that need to be selected appropriately. The number of latent variables $R$ is crucial to prediction performance, resulting in under-modelling when $R$ was too small while overfitting easily when $R$ was too large. The cross-validations were performed when $R$ and $\lambda$ were varying from 1 to 10 with the step length of 1. In order to alleviate the computation burden, the procedure was stopped when the performance starts to decrease with increasing $\lambda$. Fig. \ref{fig:parameter1} shows the grid of cross-validation performance of HOPLS in Case 2 with the optimal parameters marked by green squares. Observe that the optimal  $\lambda$ for HOPLS is related to the noise levels, and for increasing noise levels, the best performance is obtained by smaller $\lambda$, implying that only few significant loadings on each mode are kept in the latent space. This is expected, due to the fact that the model complexity is controlled by $\lambda$ to suppress noise. The optimal $R$ and $\lambda$ for all three methods at different noise levels are shown in \mbox{Table \ref{table:parameters1}}.

\begin{figure}[htbp]
\centering
\includegraphics[width=0.45\textwidth]{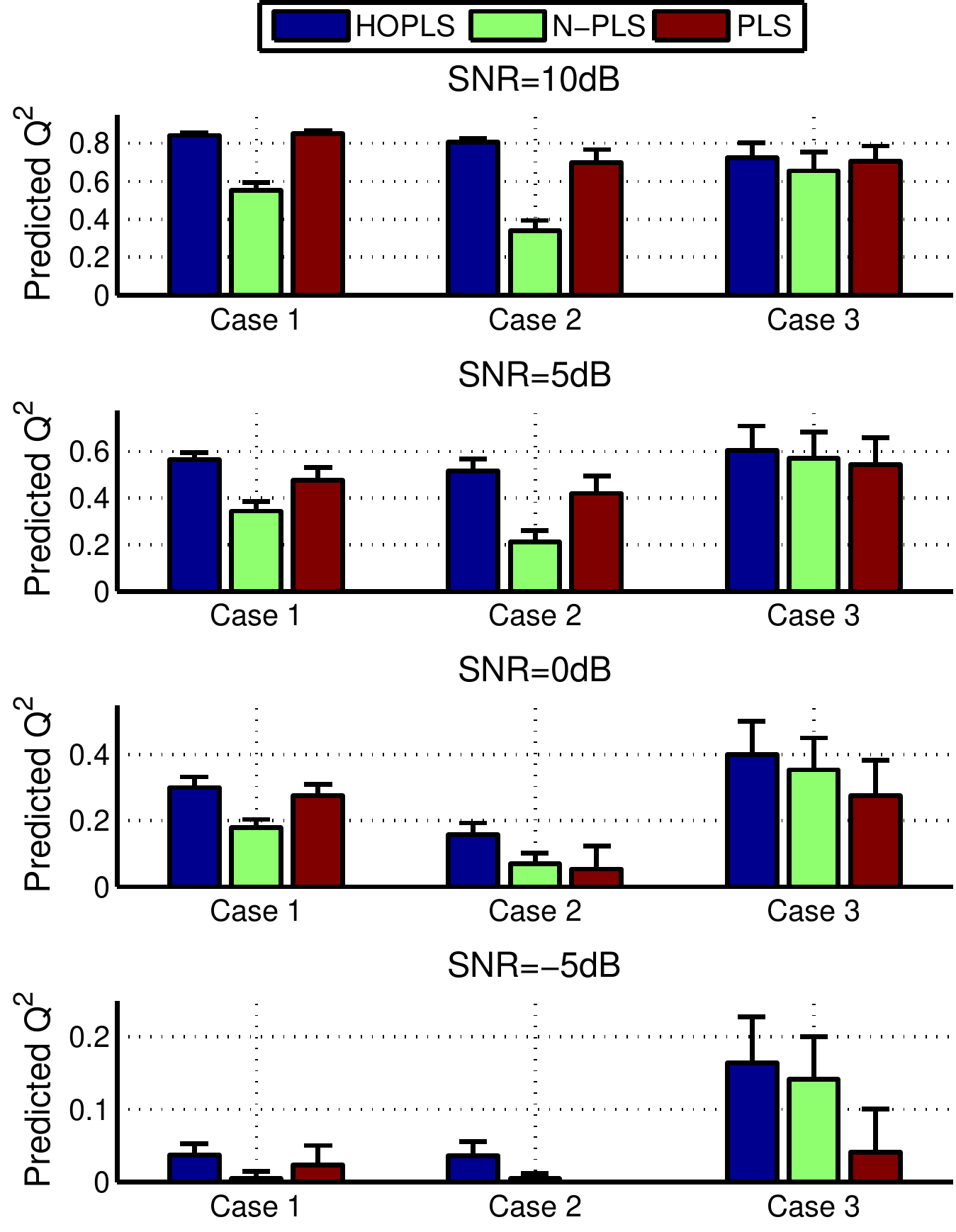}
\caption{\small The prediction performance comparison among HOPLS, N-PLS and PLS at different noise levels for three cases. Case1: $\{\tensor{X},\tensor{Y}\}\in\mathds{R}^{20\times 10\times 10}$ and $\{\mathbf{P,Q}\} \sim\mathcal{N}(0,1) $; Case 2: $\{\tensor{X},\tensor{Y}\}\in\mathds{R}^{10\times 10\times 10}$ and $\{\mathbf{P,Q}\}\sim\mathcal{N}(0,1)$; Case 3: $\{\tensor{X},\tensor{Y}\}\in\mathds{R}^{10\times 10\times 10}$ and $\{\mathbf{P,Q}\}\sim U(0,1)$. }
\label{fig:simulation1}
\end{figure}

After the selection the parameters, HOPLS, N-PLS and PLS are re-trained on the whole calibration dataset using the optimal $R$ and $\lambda$, and were applied to the validation datasets for evaluation. Fig. \ref{fig:simulation1} illustrates the predictive performance over 50 validation datasets for the three cases at four different noise levels. In Case 1, a relatively larger sample size was available, when SNR=10dB, HOPLS achieved a similar prediction performance to PLS while outperforming N-PLS. With increasing the noise level in both the calibration and validation datasets, HOPLS showed a relatively stable performance whereas the performance of PLS decreased significantly. The superiority of HOPLS was shown clearly with increasing the noise level. In Case 2 where a smaller sample size was available, HOPLS exhibited better performance than the other two models and the superiority of HOPLS was more pronounced at high noise levels, especially for SNR$\leq$5dB. These results demonstrated that HOPLS is more robust to noise in comparison with N-PLS and PLS. If we compare Case 1 with Case 2 at different noise levels, the results revealed that the superiority of HOPLS over the other two methods was enhanced in Case 2, illustrating the advantage of HOPLS in modeling datasets with small sample size. Note that N-PLS also showed better performance than PLS when SNR$\leq$0dB in Case 2, demonstrating the advantages of modeling the dataset in a tensor form for small sample sizes. In Case 3, N-PLS showed much better performance as compared to its performance in Case 1 and Case 2, implying sensitivity of N-PLS to data distribution. With the increasing noise level, both HOPLS and N-PLS showed enhanced predictive abilities over PLS.

\subsubsection{Datasets with tensor structure}

Note that the datasets generated by (\ref{Eq:gendata-pls}) do not originally possess multi-way data structures although they were organized in a tensor form, thus the structure information of data was not important for prediction. We here assume that HOPLS is more suitable for the datasets which originally have multi-way structure, i.e. information carried by interaction among each mode are useful for our regression problem. In order to verify our assumption, the independent data $\tensor{X}$ and dependent data $\tensor{Y}$ were generated according to the Tucker model that is regarded as a general model for tensors. The latent variables $\mathbf{t}$ were generated in the same way as described in Section \ref{simulations_PLS}. A sequence of loadings $\mathbf{P}^{(n)},\mathbf{Q}^{(m)}$ and the core tensors were drawn from $\mathcal{N}(0,1)$. For the validation dataset, the latent matrix $\mathbf{T}$ was generated from the same distribution as the calibration dataset, while the core tensors and loadings were fixed. Similarly to the study in Section \ref{simulations_PLS}, to investigate how the prediction performance is affected by noise levels and sample size, $\{\tensor{X},\tensor{Y}\}\in\mathds{R}^{20\times 10\times 10}$ (Case 1) and $\{\tensor{X},\tensor{Y}\}\in\mathds{R}^{10\times 10\times 10}$ (Case 2) were generated under noise levels of 10dB, 5dB, 0dB and -5dB. The datasets for both cases were generated from five latent variables.


\begin{table}[htbp]
\caption{\small The selection of parameters $R$ and $\lambda$ in Case 2.}
\renewcommand{\arraystretch}{1.3}
\label{table:parameters2}
\centering
\begin{tabular}{c c c c c  c c c c c}
  \hline
  \multirow{2}{*}{SNR} & \multirow{2}{*}{PLS} & \multirow{2}{*}{N-PLS} & \multicolumn{2}{c}{HOPLS} & \multirow{2}{*}{SNR} & \multirow{2}{*}{PLS} & \multirow{2}{*}{N-PLS} & \multicolumn{2}{c}{HOPLS}\\
  \cline{4-5} \cline{9-10}
  & & & $R$ &$\lambda$   & & & & $R$ &$\lambda$\\
    \hline
  10dB & 5 & 7 & 9 &4  & 0dB & 4 & 4 & 4 &2 \\
  5dB & 4 & 6 & 8 &2 &-5dB & 2 & 4 & 2 &1  \\
  \hline
\end{tabular}
\end{table}

\begin{figure}[htbp]
\centering
\includegraphics[width=0.5\textwidth]{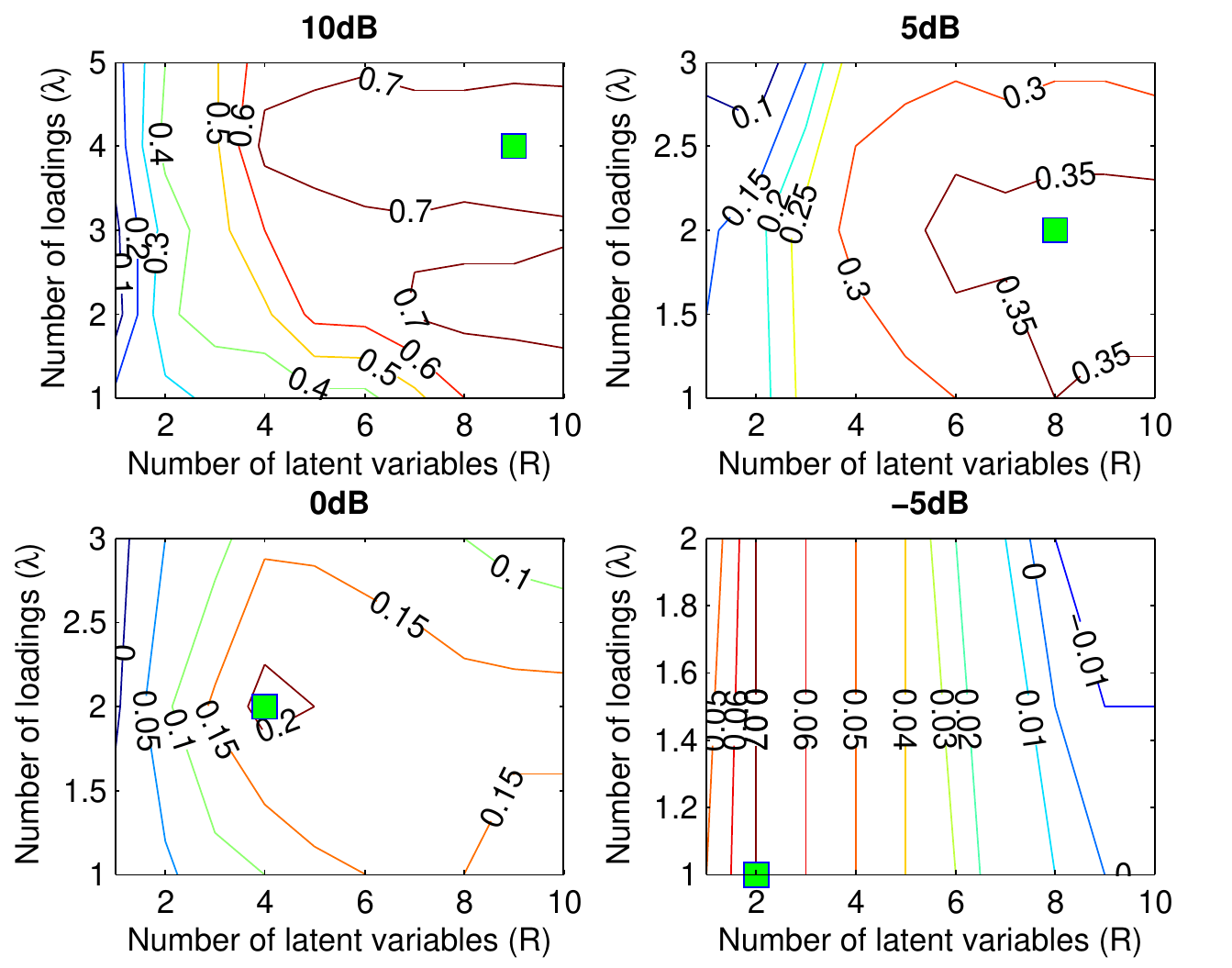}
\caption{\small Five-fold cross-validation performance of HOPLS at different noise levels versus the number of latent variables ($R$) and loadings ($\lambda$). The optimal values for these two parameters are marked by green squares.}
\label{fig:parameterscontour2}
\end{figure}

The optimal parameters of $R$ and $\lambda$ were shown in Table \ref{table:parameters2}. Observe that the optimal $R$ is smaller with the increasing noise level for all the three methods. The parameter $\lambda$ in HOPLS was also shown to have a similar behavior. For more detail, Fig. \ref{fig:parameterscontour2} exhibits the cross-validation performance grid of HOPLS with respect to $R$ and $\lambda$. When SNR was 10dB, the optimal $\lambda$ was 4, while it were 2, 2 and 1 for 5dB, 0dB and -5dB respectively. This indicates that the model complexity can be adapted to provide a better model when a specific dataset was given, demonstrating the flexibility of HOPLS model.

\begin{figure}[htbp]
\centering
\includegraphics[width=0.5\textwidth]{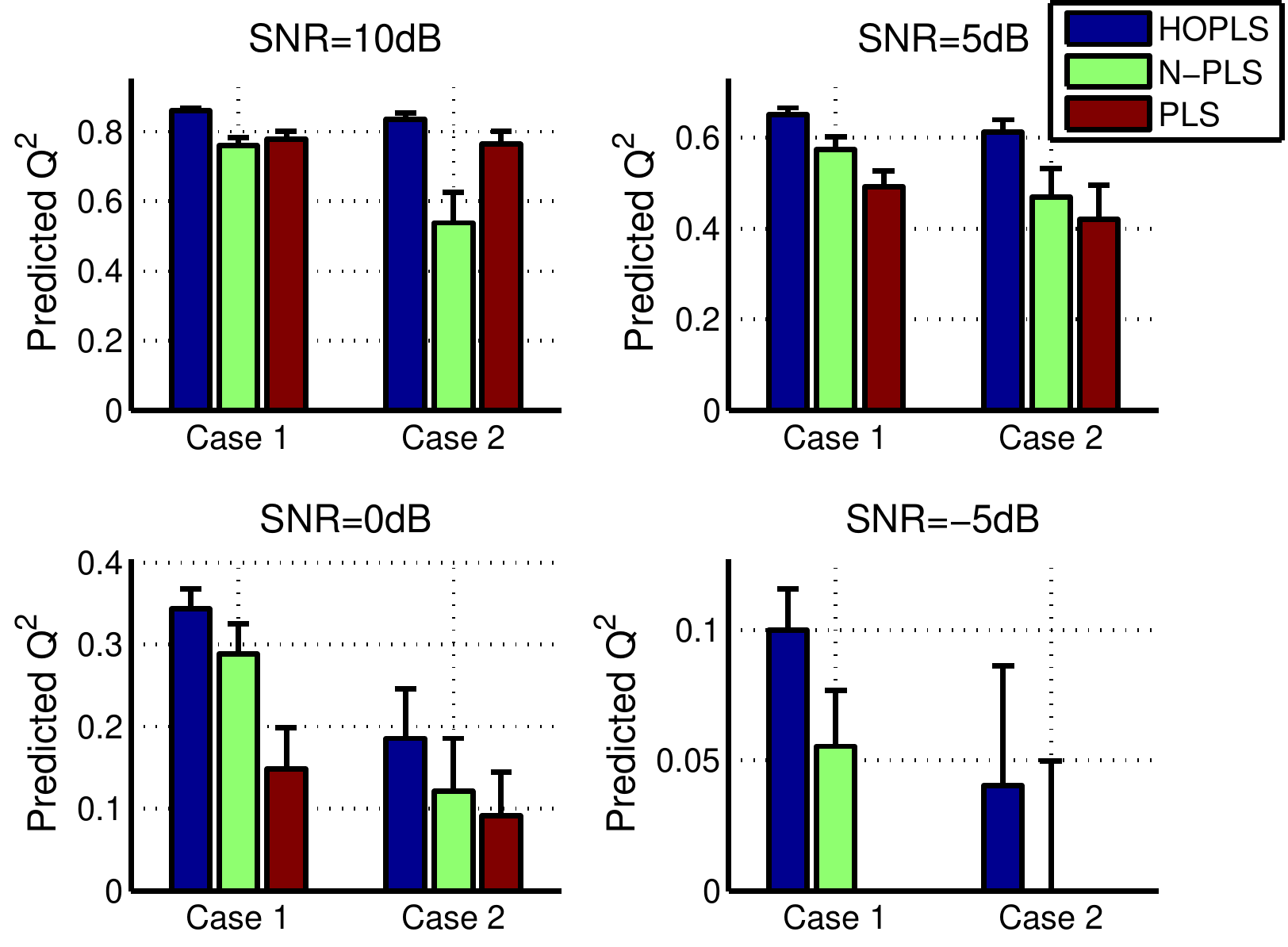}
\caption{\small The prediction performance comparison among HOPLS, N-PLS and PLS at different noise levels for the two cases (i.e., Case1: $\{\tensor{X},\tensor{Y}\}\in\mathds{R}^{20\times 10\times 10}$ and Case 2: $\{\tensor{X},\tensor{Y}\}\in\mathds{R}^{10\times 10\times 10}$) with different sample size.}
\label{fig:simulation2}
\end{figure}

The prediction performance evaluated over 50 validation datasets using HOPLS, N-PLS and PLS with individually selected parameters were compared for different noise levels and different sample sizes (i.e., two cases). As shown in Fig. \ref{fig:simulation2}, for both the cases, the prediction performance of HOPLS was better than both N-PLS and PLS at 10dB, and the discrepancy among them was enhanced when SNR changed from 10dB to -5dB. The performance of PLS decreased significantly with the increasing noise levels while HOPLS and N-PLS showed relative robustness to noise. Note that both HOPLS and N-PLS outperformed PLS when SNR$\leq$5dB, illustrating the advantages of tensor-based methods with respect to noisy data. Regarding the small sample size problem, we found the performances of all the three methods were decreased when comparing Case 1 with Case 2. Observe that the superiority of HOPLS over N-PLS and PLS were enhanced in Case 2 as compared to Case 1 at all noise levels. A comparison of Fig. \ref{fig:simulation2} and Fig. \ref{fig:simulation1} shows that the performances are significantly improved when handling the datasets having tensor structure by tensor-based methods (e.g., HOPLS and N-PLS). As for N-PLS, it outperformed PLS when the datasets have tensor structure and in the presence of high noise, but it may not perform well when the datasets have no tensor structure. By contrast, HOPLS performed well in both cases, in particular, it outperformed both N-PLS and PLS in critical cases with high noise and small sample size.

\subsubsection{Comparison on matrix response data}
In this simulation, the response data was a two-way matrix, thus HOPLS2 algorithm was used to evaluate the performance. $\underline{\mathbf{X}}\in\mathds{R}^{5\times 5\times 5\times 5}$ and $\mathbf{Y}\in\mathds{R}^{5\times 2}$ were generated from a full-rank normal distribution $\mathcal{N}(0,1)$, which satisfies $\mathbf{Y}=\mathbf{X}_{(1)}\mathbf{W}$ where $\mathbf{W}$ was also generated from $\mathcal{N}(0,1)$. Fig. \ref{fig:simulation_rand_2}(A) visualizes the predicted and original data with the red line indicating the ideal prediction. Observe that HOPLS was able to predict the validation dataset with smaller error than PLS and N-PLS. The independent data and dependent data are visualized in the latent space as shown in Fig. \ref{fig:simulation_rand_2}(B).

\begin{figure}[htbp]
\centering
\includegraphics[width=0.45\textwidth]{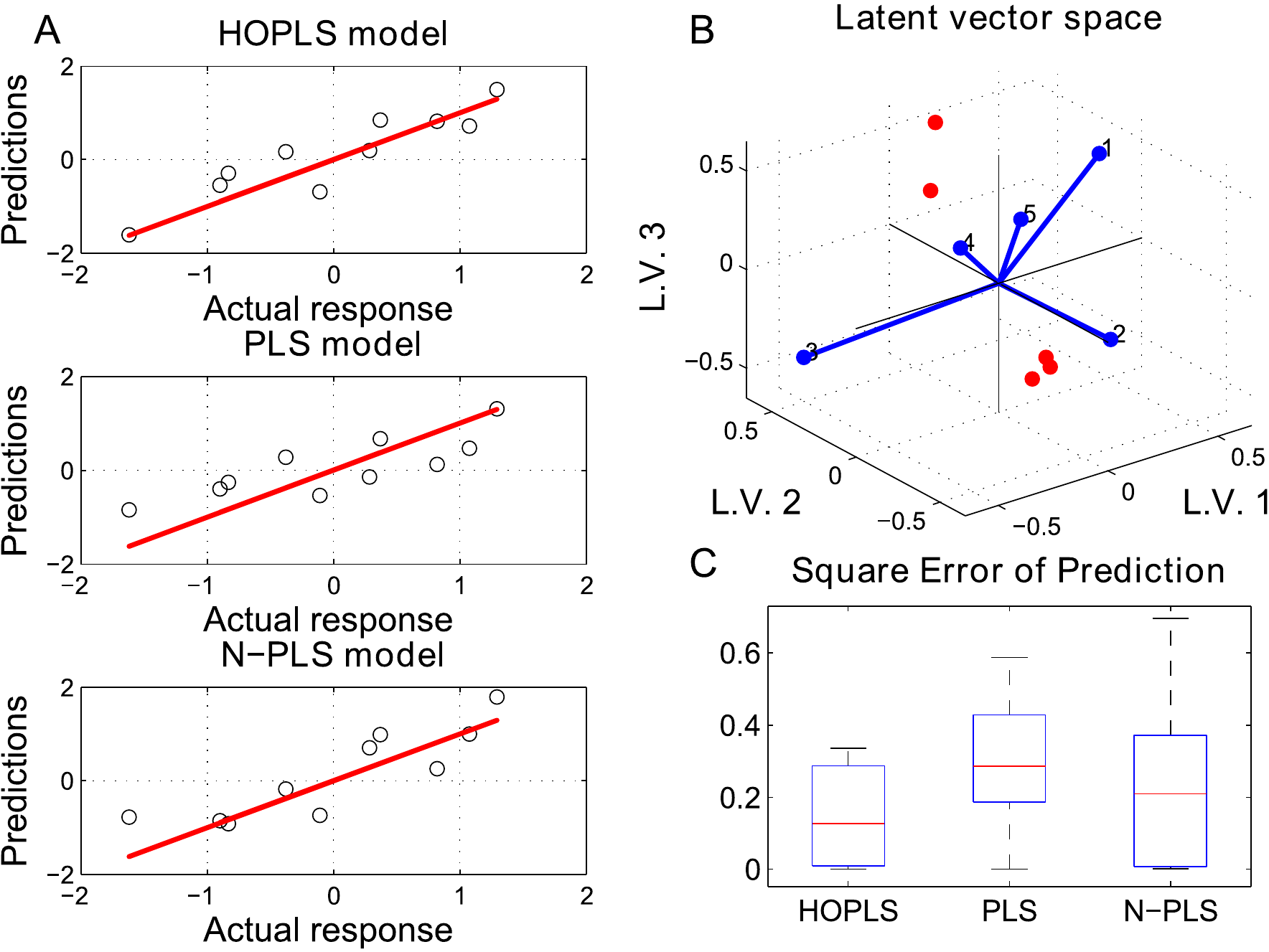}
\caption{\small (A) The scatter plot of predicted against actual data for each model. (B) Data distribution in the latent vector spaces. Each blue point denotes one sample of the independent variable, while the red points denote samples of response variables.  (C) depicts the distribution of the square error of prediction on the validation dataset.}
\label{fig:simulation_rand_2}
\end{figure}

\subsection{Decoding of ECoG signals}
In \cite{chao2010long}, ECoG-based decoding of 3D hand trajectories was demonstrated by means of classical PLS regression\footnote{The datasets and more detailed description are freely available from \url{http://neurotycho.org}.} \cite{Nagasaka2011}. The  movement of monkeys was captured by an optical motion capture system (Vicon Motion Systems, USA). In all experiments, each monkey wore a custom-made jacket with reflective markers for motion capture affixed to the left shoulder, elbows, wrists and hand, thus the response data was naturally represented as a 3th-order tensor (i.e., time $\times$ 3D positions $\times$ markers). Although PLS can be applied to predict the trajectories corresponding to each marker individually, the structure information among four markers would be unused. The ECoG data is usually transformed to the time-frequency domain in order to extract the discriminative features for decoding movement trajectories. Hence, the independent data is also naturally represented as a higher-order tensor (i.e., channel $\times$ time $\times$ frequency $\times$ samples). In this study, the proposed HOPLS regression model was applied for decoding movement trajectories based on ECoG signals to verify its effectiveness in real-world applications.

\begin{figure}[htbp]
\centering
\includegraphics[width=0.5\textwidth ]{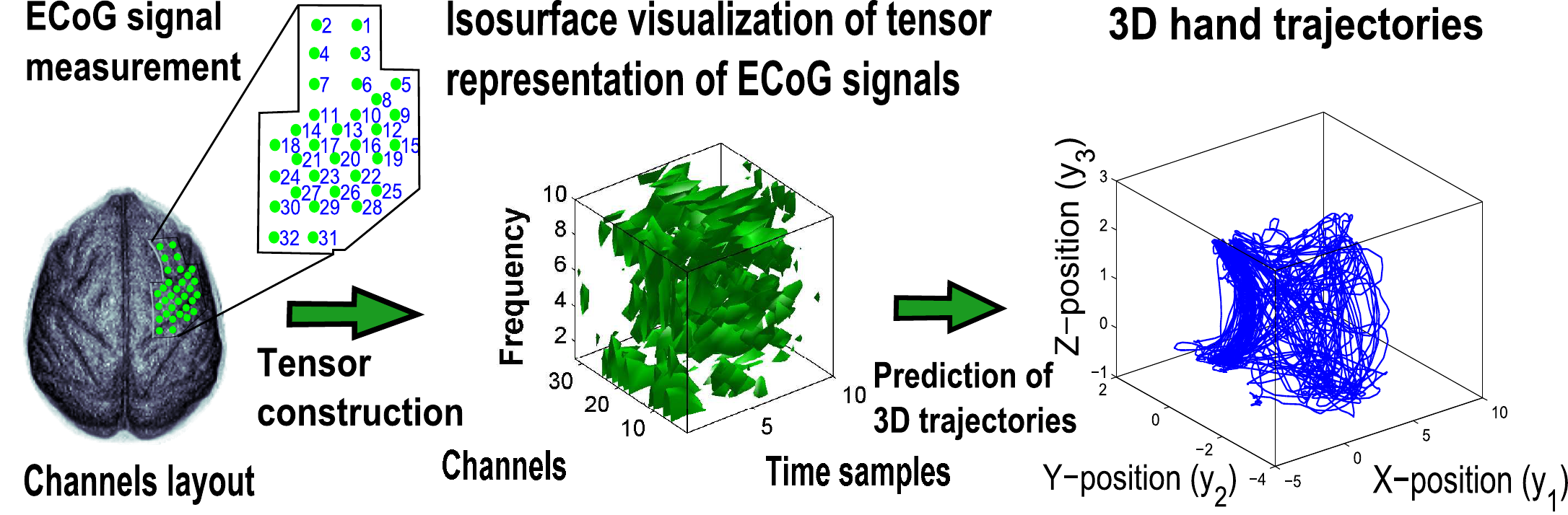}
\caption{\small The scheme for decoding of 3D hand movement trajectories from ECoG signals.}
\label{fig:DecodingScheme}
\end{figure}

The overall scheme of ECoG decoding is illustrated in Fig. \ref{fig:DecodingScheme}. Specifically, ECoG signals were preprocessed by a band-pass filter with cutoff frequencies at 0.1 and 600Hz and a spatial filter with a common average reference. Motion marker positions were down-sampled to 20Hz. In order to represent features related to the movement trajectory from ECoG signals, the Morlet wavelet transformation at 10 different center frequencies (10-150Hz, arranged in a logarithmic scale) was used to obtain the time-frequency representation. For each sample point of 3D trajectories, the most recent one-second ECoG signals were used to construct predictors. Finally, a three-order tensor of ECoG features $\underline{\mathbf{X}}\in\mathds{R}^{I_{1}\times 32 \times 100}$ (samples $\times$ channels $\times$ time-frequency) was formed to represent independent data.

\begin{figure}[htbp]
\centering
\includegraphics[width=0.45\textwidth]{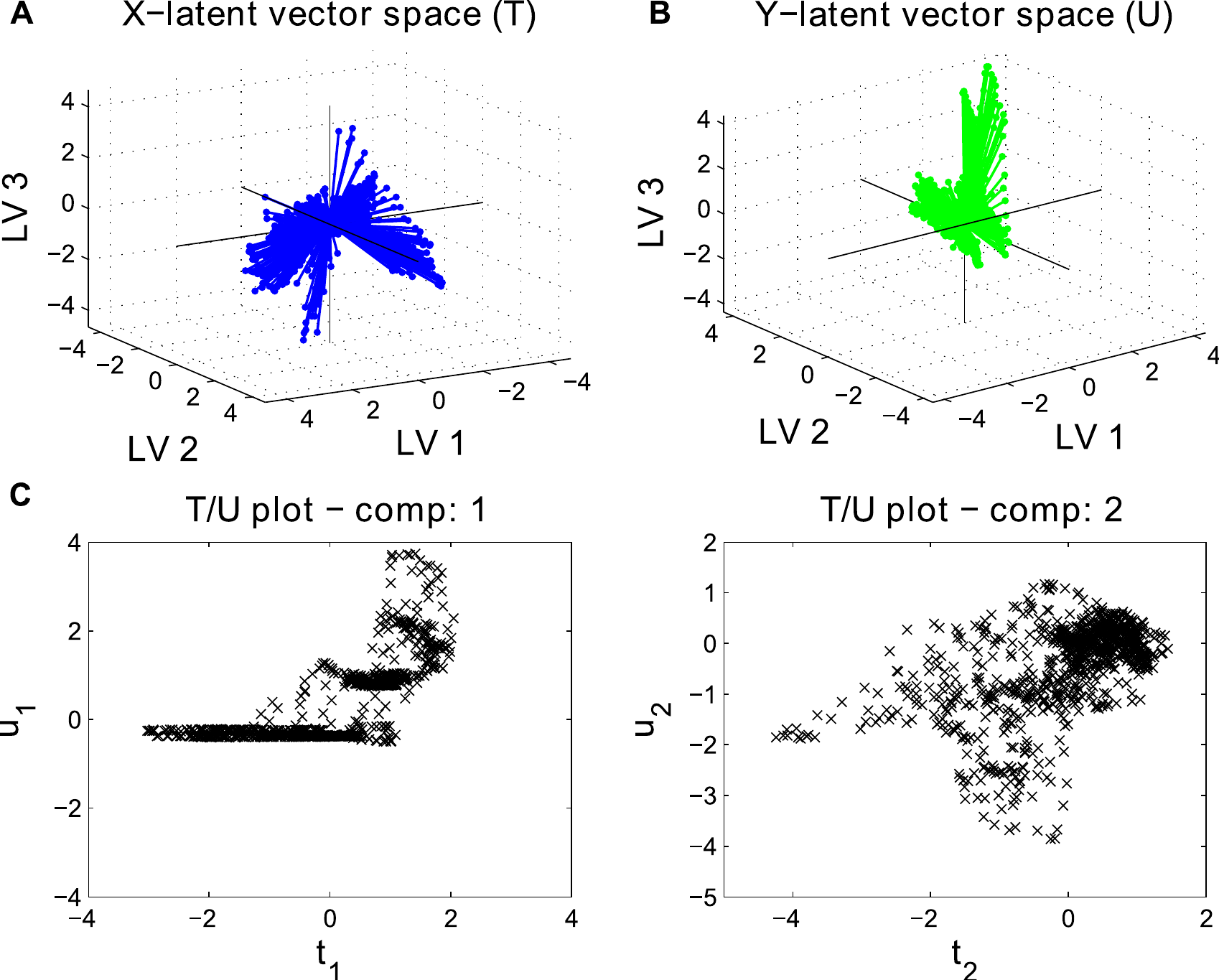}
\caption{\small Panels (A) and (B) depict data distributions in the $\underline{\mathbf{X}}$-latent space $\mathbf{T}$ and $\mathbf{Y}$-latent space $\mathbf{U}$, respectively.  (C) presents a joint distribution between $\underline{\mathbf{X}}$- and $\mathbf{Y}$-latent vectors. }
\label{fig:TandU}
\end{figure}

We first applied the HOPLS2 algorithm to predict only the hand movement trajectory, represented as a matrix $\mathbf{Y}$, for comparison with other methods. The ECoG data was divided into a calibration dataset (10 minutes) and a validation dataset (5 minutes). To select the optimal parameters of $L_n$ and $R$, the cross-validation was applied on the calibration dataset. Finally, $L_n=10$ and $R=23$ were selected for the HOPLS model. Likewise, the best values of $R$ for PLS and N-PLS were 19 and 60, respectively. The X-latent space is visualized in Fig. \ref{fig:TandU}(A), where each point represents one sample of independent variables, while the Y-latent space is presented in Fig. \ref{fig:TandU}(B), with each point representing one dependent sample. Observe that the distributions of these two latent variable spaces were quite similar, and the two dominant clusters are clearly distinguished. The joint distributions between each $\mathbf{t}_r$ and $\mathbf{u}_r$ are depicted in Fig. \ref{fig:TandU}(C). Two clusters can be observed from the first component which might be related to the `movement' and `non-movement' behaviors.

\begin{figure}[tbhp]
\centering
\includegraphics[width=0.5\textwidth]{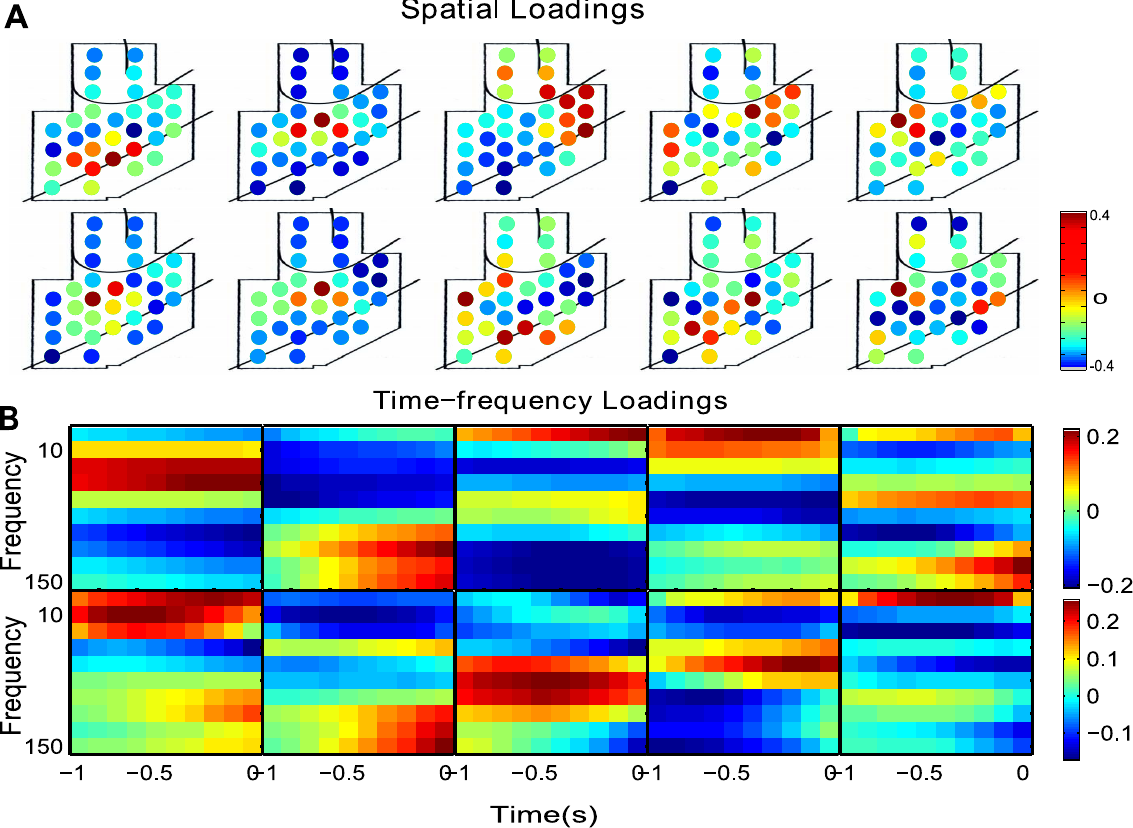}
\caption{\small (A) Spatial loadings $\mathbf{P}_r^{(1)}$ corresponding to the first two latent components. Each row shows 5 significant loading vectors.  Likewise, (B) depicts time-frequency loadings $\mathbf{P}^{(2)}_r$, with $\beta$ and $\gamma$-band exhibiting significant contribution. }
\label{fig:GTPQ}
\end{figure}

Another advantage of HOPLS was better physical interpretation of the model. To investigate how the spatial, spectral, and temporal structure of ECoG data were used to create the regression model, loading vectors can be regarded as a subspace basis in spatial and time-frequency domains, as shown in  Fig. \ref{fig:GTPQ}. With regard to time-frequency loadings, the $\beta$- and $\gamma$-band activities were most significant implying the importance of $\beta$, $\gamma$-band activities for encoding of movements; the duration of $\beta$-band was longer than that of $\gamma$-band, which indicates that hand movements were related to long history oscillations of $\beta$-band and short history oscillations of $\gamma$-band. These findings also demonstrated that a high gamma band activity in the premotor cortex is associated with movement preparation, initiation and maintenance\cite{rickert2005encoding}.

\begin{table*}[htbp]
\renewcommand{\arraystretch}{1.3}
\caption{\small Comprehensive comparison of the HOPLS, N-PLS and PLS on the prediction of 3D hand trajectories. The numbers of latent vector for HOPLS, N-PLS and Unfold-PLS were 23, 60, and 19, respectively. }
\label{table_results}
\centering
\begin{tabular}{c c c c c c c c c c c c c c c c  }
\hline\hline
\multirow{2}{*}{Data Set} & \multirow{2}{*}{Model} & \multirow{2}{*}{$Q^{2}$(ECoG)} & \multicolumn{4}{c}{$Q^{2}$(3D hand positions)} & & \multicolumn{4}{c}{RMSEP (3D hand positions)} & & \multicolumn{3}{c}{Correlation}\\
\cline{4-7}  \cline{9-12}\cline{14-16}\noalign{\smallskip}
 & & & $X$ & $Y$ & $Z$ & Mean & & $X$ & $Y$ & $Z$ & Mean  & & $X$ & $Y$ & $Z$\\
 \hline
  \multirow{3}{*}{DS1} & HOPLS  &0.25 & \textbf{0.43} & \textbf{0.48} & \textbf{0.61} &\textbf{0.51}    & & \textbf{0.82} & \textbf{0.70} & \textbf{0.66} & \textbf{0.73}& & \textbf{0.67} & \textbf{0.72} & \textbf{0.78}   \\
  & N-PLS  &0.33 &0.39 &0.44 &0.59 &0.47 &  &0.85 &0.73 &0.68 & 0.75 &  &0.64 &0.71 &0.77 \\
  & Unfold-PLS  &0.23 &0.39 & 0.45 &0.59 & 0.48 &  &0.85 &0.72 &0.68 &0.75  &  &0.64 & \textbf{0.72} &0.77  \\
  \hline
    \multirow{3}{*}{DS2} & HOPLS  &0.25 &\textbf{0.12} &\textbf{0.42} & 0.50 &\textbf{0.35} &  &\textbf{0.99} &\textbf{0.77} &0.72 &\textbf{0.83} &  &\textbf{0.35} & \textbf{0.64} &0.71  \\
  & N-PLS  &0.33 &0.03 &0.40 &0.51 &0.32 &  &1.04 &0.78 &0.71 &0.84 &  &0.32 &\textbf{0.64} &0.71  \\
  & Unfold-PLS  &0.22 &0.05 &0.40 &\textbf{0.53} & 0.32 &  &1.04 &0.78 &\textbf{0.70} &0.84 &  &0.34 &0.63 &\textbf{0.73}  \\
  \hline
    \multirow{3}{*}{DS3} & HOPLS  &0.22 &\textbf{0.36} & \textbf{0.39} & \textbf {0.48} &\textbf{0.41} &  &\textbf{0.74} & \textbf {0.77} & \textbf {0.66} & \textbf{0.73} &  & \textbf {0.62} & \textbf{0.62} & \textbf {0.69}  \\
  & N-PLS  &0.30 &0.31 &0.37 &0.46 &0.38 &  &0.77 &0.78 &0.68 &0.74 &  &0.61 & \textbf{0.62} &0.68  \\
  & Unfold-PLS  &0.21 &0.30 &{0.37} &{0.46} &0.38  &  &0.77 &0.79 &{0.67} &{0.74} &  &0.61 & \textbf{0.62} &{0.68} \\
  \hline
    \multirow{3}{*}{DS4} & HOPLS  &0.16 &\textbf{0.16} &\textbf{0.50} & \textbf{0.57} &\textbf{0.41}&  &\textbf{1.04} &\textbf{0.66} & \textbf{0.62} &\textbf{0.77} &  &\textbf{0.43} &\textbf{0.71} & \textbf{0.76}\\
  & N-PLS  &0.23 &0.12 &0.45 &0.55 &0.37 &  &1.06 &0.69 &0.67 &0.80 &  &0.41 &0.70 & \textbf{0.76}  \\
  & Unfold-PLS  &0.15 &{0.11} &0.46 & \textbf{0.57} &0.38 &  &{1.07} &0.69 & \textbf{0.62} &0.79 &  &{0.42} &0.70 & \textbf{0.76}  \\
\hline\hline
\end{tabular}
\end{table*}

From Table \ref{table_results}, observe that the improved prediction performances were achieved by HOPLS, for all the performance metrics. In particular, the results from dataset 1 demonstrated that the improvements by HOPLS over N-PLS were 0.03 for the correlation coefficient of X-position, 0.02 for averaged RMSEP, 0.04 for averaged $Q^{2}$, whereas the improvements by HOPLS over PLS were 0.03 for the correlation coefficient of X-position, 0.02 for averaged RMSEP, and 0.03 for averaged $Q^{2}$.

Since HOPLS enables us to create a regression model between two higher-order tensors, all trajectories recorded from shoulder, elbow, wrist and hand were contructed as a tensor $\tensor{Y}\in\mathds{R}^{I_{1}\times 3 \times 4}$ (samples$\times$3D positions$\times$markers). In order to verify the superiority of HOPLS for small sample sizes, we used 100 second data for calibration and 100 second data for validation. The resolution of time-frequency representations was improved to provide more detailed features, thus we have a 4th-order tensor $\tensor{X}\in\mathds{R}^{I_{1}\times 32 \times 20 \times 20}$ (samples$\times$channels$\times$ time $\times$ frequency). The prediction performances from HOPLS, N-PLS and PLS are shown in Fig. \ref{fig:ECoGperformance}, illustrating the effectiveness of HOPLS when the response data originally has tensor structure.

\begin{figure}[tbhp]
\centering
\includegraphics[width=0.45\textwidth]{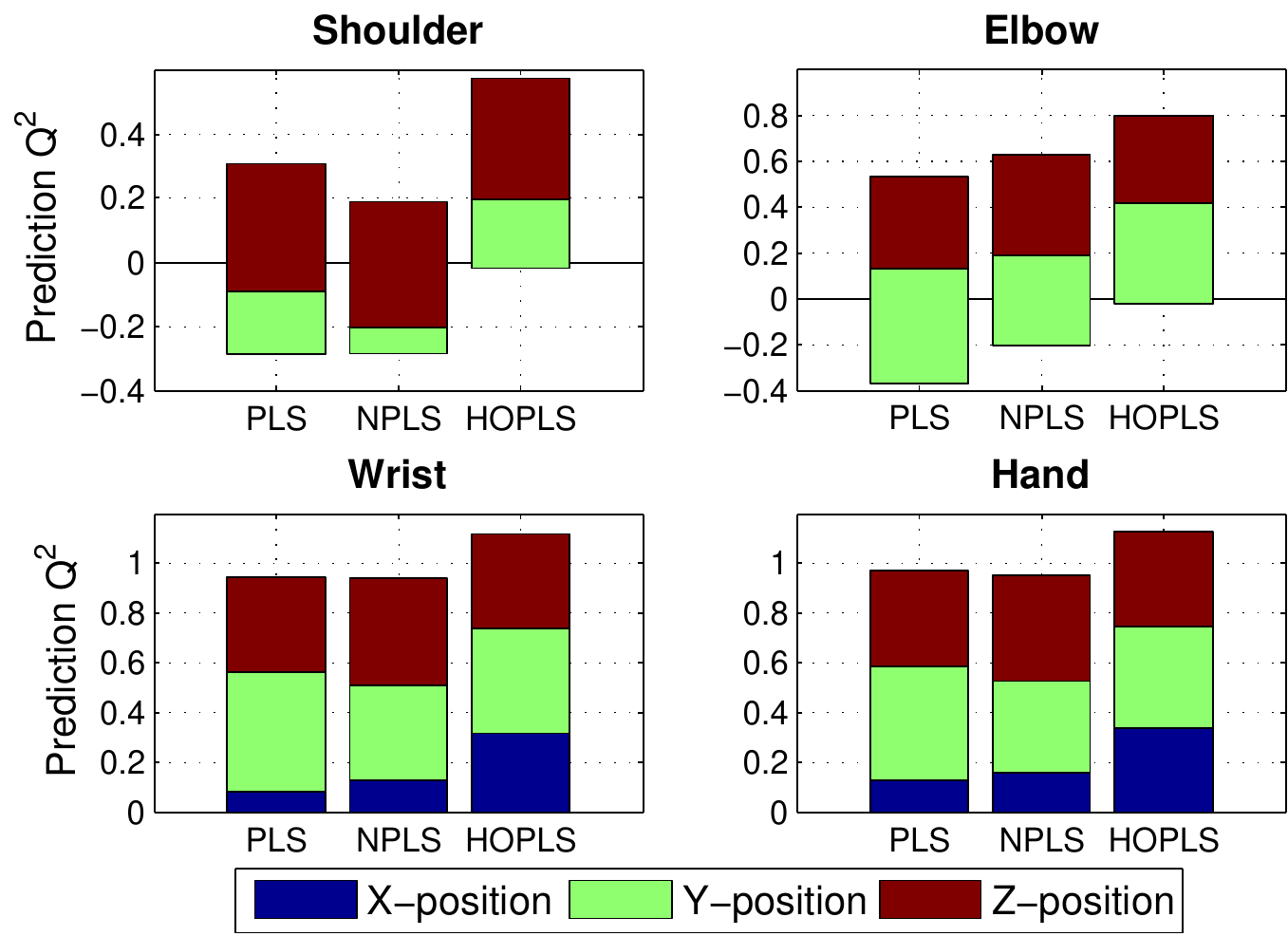}
\caption{\small The prediction performance of 3D trajectories recorded from shoulder, elbow, wrist and hand. The optimal $R$ are 16, 28, 49 for PLS, N-PLS and HOPLS, respectively, and $\lambda=5$ for HOPLS. }
\label{fig:ECoGperformance}
\end{figure}

Time-frequency features of the most recent one-second window for each sample are extremely overlapped, resulting in a lot of information redundancy and high computational burden. In addition, it is generally not necessary to predict behaviors with a high time-resolution. Hence, an additional analysis has been performed by down-sampling motion marker positions at 1Hz, to ensure that non-overlapped features were used in any adjacent samples. The cross-validation performance was evaluated for all the markers from the ten minute calibration dataset and the best performance for PLS of $Q^2=0.19$ was obtained using $R=2$, for N-PLS it was $Q^2=0.22$ obtained by $R=5$, and for HOPLS it was $Q^2=0.28$ obtained by $R=24, \lambda=5$. The prediction performances on the five minute validation dataset are shown in Fig. \ref{fig:ECoGperformanceDS20}, implying the significant improvements obtained by HOPLS over N-PLS and PLS for all the four markers. For visualization, Fig. \ref{fig:trajectory} exhibits the observed and predicted 3D hand trajectories in the 150s time window. 

\begin{figure}[tbhp]
\centering
\includegraphics[width=0.48\textwidth]{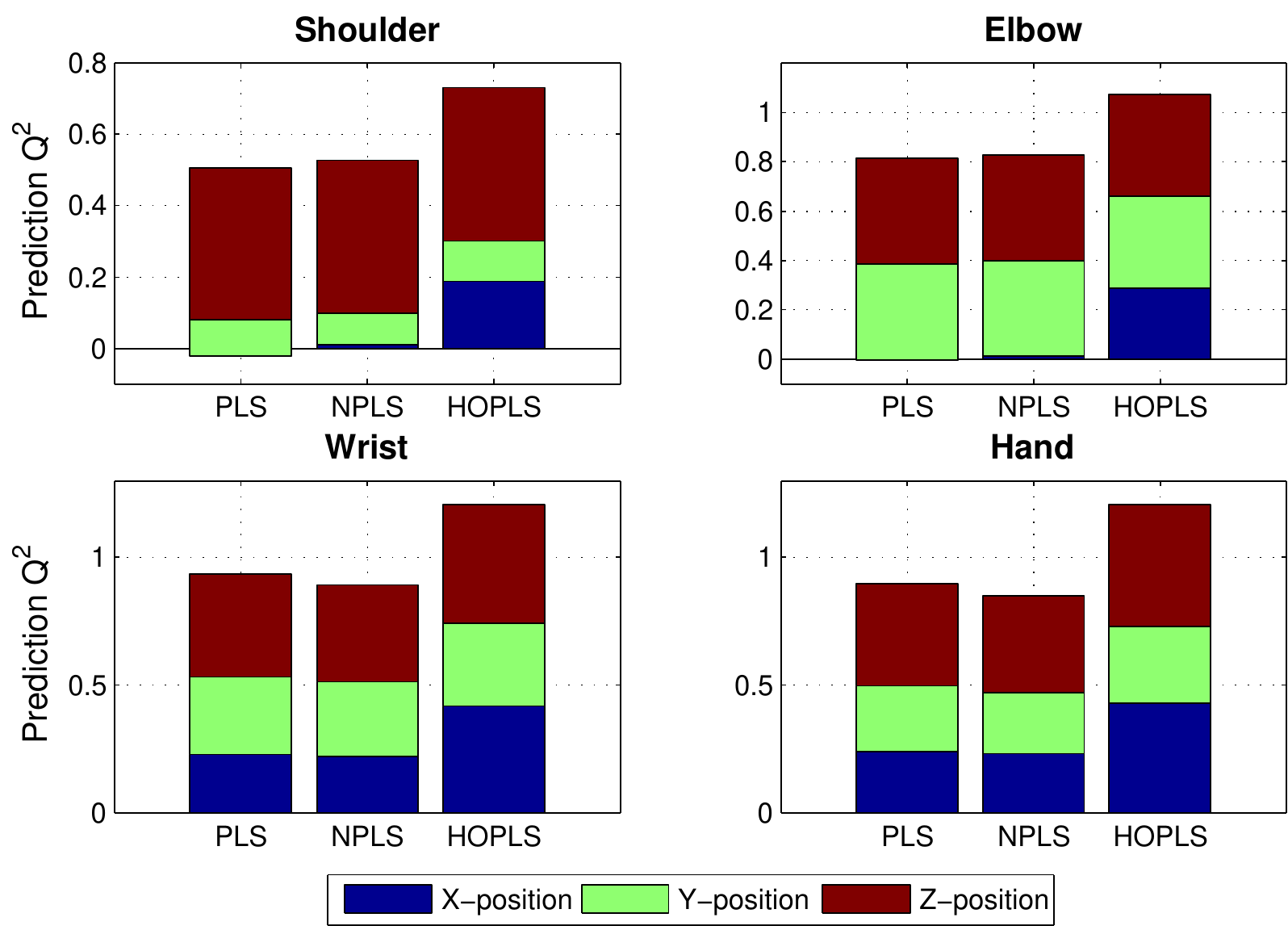}
\caption{\small The prediction performance of 3D trajectories for shoulder, elbow, wrist and hand using non-overlapped ECoG features. }
\label{fig:ECoGperformanceDS20}
\end{figure}

\begin{figure}[tbhp]
\centering
\includegraphics[width=0.5\textwidth]{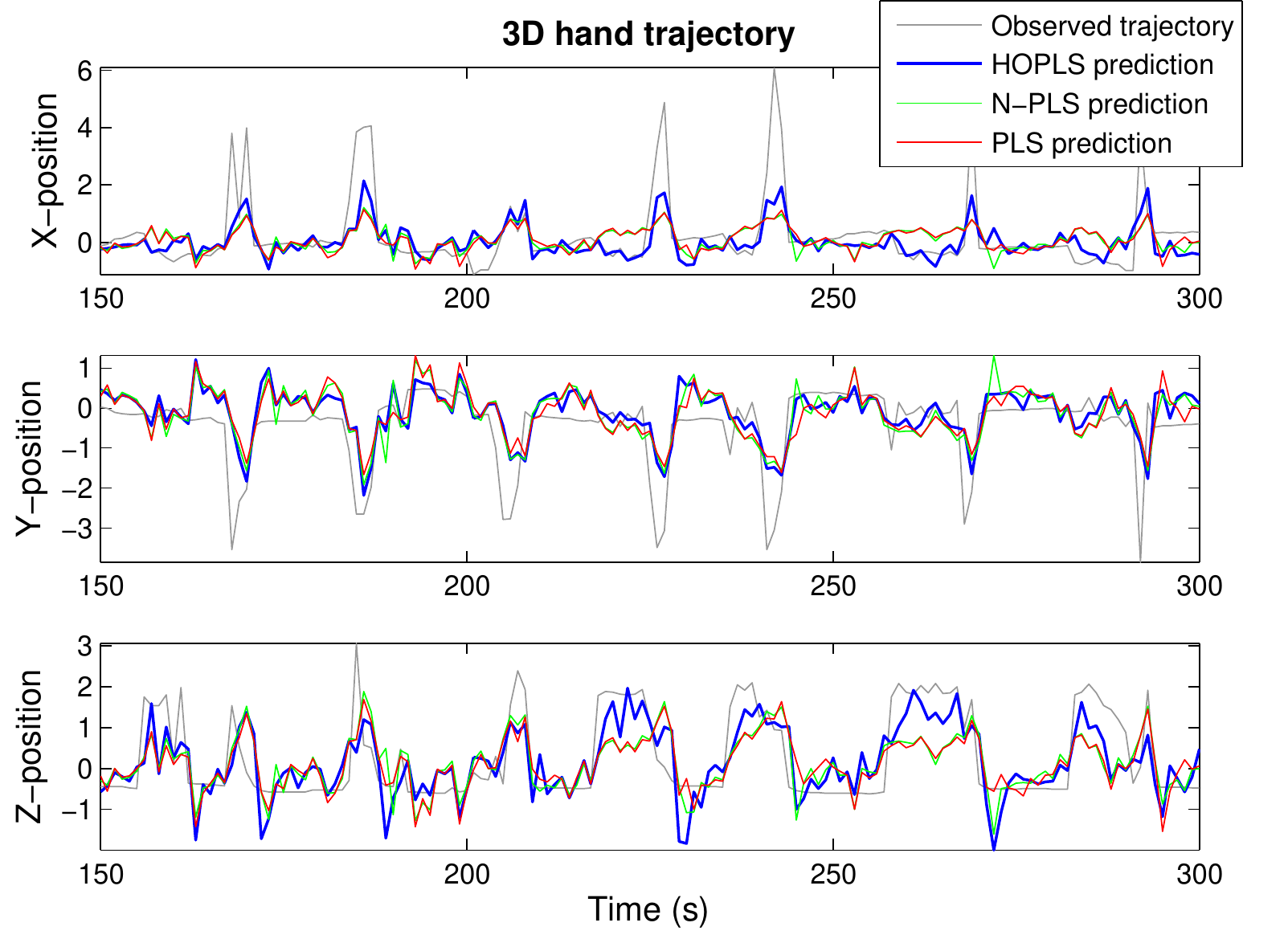}
\caption{\small Visualization of observed trajectories (150s time window) and the trajectories predicted by HOPLS, N-PLS and PLS.  }
\label{fig:trajectory}
\end{figure}

\section{Conclusions}
\label{conclusions}
The higher-order partial least squares (HOPLS) has been proposed as a generalized multilinear regression model. The analysis and simulations have shown that the advantages of the proposed model include its robustness to noise and enhanced performance for small sample sizes. In addition, HOPLS provides an optimal tradeoff between fitness and overfitting due to the fact that model complexity can be adapted by a hyperparameter. The proposed strategy to find a closed-form solution for HOPLS makes computation more efficient than the existing algorithms. The results for a real-world application in decoding 3D movement trajectories from ECoG signals have also demonstrated that HOPLS would be a promising multilinear subspace regression method.

\bibliographystyle{IEEEtran}
\bibliography{IEEEabrv,reference}
\vfill\vfill\vfill\vfill\vfill\vfill






\end{document}